\renewcommand*{\backref}[1]{\ifx#1\relax \else Page #1 \fi}
\renewcommand*{\backrefalt}[4]{%
    \ifcase #1 \footnotesize{(Not cited.)}%
    \or        \footnotesize{(Cited on page~#2.)}%
    \else      \footnotesize{(Cited on pages~#2.)}%
    \fi}
\newcommand{\brackets}[1]{\left[ #1 \right]}
\newcommand{\parenth}[1]{\left( #1 \right)}
\newtheoremstyle{named}{}{}{\itshape}{}{\bfseries}{.}{.5em}{\thmnote{#3's }#1}
\theoremstyle{named}
\theoremstyle{plain}
\newtheorem{theorem}{Theorem}
\newtheorem{proposition}{Proposition}
\newtheorem{lemma}{Lemma}
\newtheorem{corollary}{Corollary}
\newlength{\widebarargwidth}
\newlength{\widebarargheight}
\newlength{\widebarargdepth}
\long\def\@makecaption#1#2{
        \vskip 0.8ex
        \setbox\@tempboxa\hbox{\small {\bf #1:} #2}
        \parindent 1.5em  
        \dimen0=\hsize
        \advance\dimen0 by -3em
        \ifdim \wd\@tempboxa >\dimen0
                \hbox to \hsize{
                        \parindent 0em
                        \hfil
                        \parbox{\dimen0}{\def\baselinestretch{0.96}\small
                                {\bf #1.} #2
                                }
                        \hfil}
        \else \hbox to \hsize{\hfil \box\@tempboxa \hfil}
        \fi
        }
\long\def\comment#1{}
\definecolor{battleshipgrey}{rgb}{0.52, 0.52, 0.51}
\definecolor{darkgray}{rgb}{0.66, 0.66, 0.66}
\definecolor{darkgreen}{rgb}{0.0, 0.2, 0.13}
\definecolor{darkspringgreen}{rgb}{0.09, 0.45, 0.27}
\definecolor{dukeblue}{rgb}{0.0, 0.0, 0.61}
\definecolor{olivedrab7}{rgb}{0.24, 0.2, 0.12}
\definecolor{darkblue}{rgb}{0.0, 0.0, 0.55}
\definecolor{darkscarlet}{rgb}{0.34, 0.01, 0.1}
\definecolor{candyapplered}{rgb}{1.0, 0.03, 0.0}
\definecolor{ao(english)}{rgb}{0.0, 0.5, 0.0}
\definecolor{applegreen}{rgb}{0.55, 0.71, 0.0}
\begin{document}
\begin{center}

{\bf{\LARGE{Improving Computational Complexity in Statistical Models
    with Second-Order Information}}}
  
\vspace*{.2in}
{\large{
\begin{tabular}{cccc}
Tongzheng Ren$^{\diamond, \ddag}$ & Jiacheng Zhuo$^{\diamond}$ & Sujay Sanghavi$^{\dagger}$ & Nhat Ho$^{\flat, \ddag}$ \\
\end{tabular}
}}

\vspace*{.1in}

\begin{tabular}{c}
Department of Computer Science, University of Texas at Austin$^\diamond$, \\
Department of Statistics and Data Sciences, University of Texas at Austin$^\flat$ \\
Department of Electrical and Computer Engineering, University of Texas at Austin$^\dagger$, \\
\end{tabular}

\today

\vspace*{.2in}

\begin{abstract}
It is known that when the statistical models are singular, i.e., the Fisher information matrix at the true parameter is degenerate, the fixed step-size gradient descent algorithm takes polynomial number of steps in terms of the sample size $n$ to converge to a final statistical radius around the true parameter, which can be unsatisfactory for the application. To further improve that computational complexity, we consider the utilization of the second-order information in the design of optimization algorithms. Specifically, we study the normalized gradient descent (NormGD) algorithm for solving parameter estimation in parametric statistical models, which is a variant of gradient descent algorithm whose step size is scaled by the maximum eigenvalue of the Hessian matrix of the empirical loss function of statistical models.
    When the population loss function, i.e., the limit of the empirical loss function when $n$ goes to infinity, is homogeneous in all directions, we demonstrate that the NormGD iterates reach a final statistical radius around the true parameter after a logarithmic number of iterations in terms of $n$. Therefore, for fixed dimension $d$, the NormGD algorithm achieves the optimal overall computational complexity $\mathcal{O}(n)$ to reach the final statistical radius. This computational complexity is cheaper than that of the fixed step-size gradient descent algorithm, which is of the order $\mathcal{O}(n^{\tau})$ for some $\tau > 1$, to reach the same statistical radius. We illustrate our general theory under two statistical models: generalized linear models and mixture models, and experimental results support our prediction with general theory.
\end{abstract}
\end{center}
\let\thefootnote\relax\footnotetext{$\ddag$ Correspondence to: Tongzheng Ren (\href{mailto:tongzheng@utexas.edu}{tongzheng@utexas.edu}) and Nhat Ho (\href{mailto:minhnhat@utexas.edu}{minhnhat@utexas.edu}).}
\section{Introduction}
Gradient descent (GD) algorithm has been one of the most well-known and broadly used (first-order) optimization methods for approximating the true parameter for parametric statistical models~\citep{Polyak_Introduction, bubeck2015convex, Nesterov_Introduction}. In unconstrained parameter settings, it is used to solve for optimal solutions $\widehat{\theta}_{n}$ of the following \emph{sample loss} function:
\begin{align}
    \min_{\theta \in \mathbb{R}^{d}} f_{n}(\theta), \label{eq:sample_unconstrained}
\end{align}
where $n$ is the sample size of i.i.d. data $X_{1}, X_{2}, \ldots, X_{n}$ generated from the underlying distribution $P_{\theta^{*}}$. Here, $\theta^{*}$ is the true but unknown parameter. 

When the step size of the gradient descent algorithm is fixed, which we refer to as \emph{fixed-step size gradient descent}, the behaviors of GD iterates for solving the empirical loss function $f_{n}$ can be analyzed via defining the corresponding \emph{population loss} function
\begin{align}
\min_{\theta \in \mathbb{R}^{d}} f(\theta), \label{eq:population_unconstrained}
\end{align}
where $f(\theta) : = \mathbb{E}[f_{n}(\theta)]$ and the outer expectation is taken with respect to the i.i.d. data $X_{1}, X_{2}, \ldots, X_{n}$. An important insight here is that the statistical and computational complexities of fixed-step size sample GD iterates $\theta_{n,\text{GD}}^{t}$ are determined by the singularity of Hessian matrix of the population loss function $f$ at $\theta^{*}$. In particular, when the Hessian matrix of $f$ at $\theta^{*}$ is non-singular, i.e., $\nabla^2 f(\theta^{*}) \succ 0$, the previous works~\cite{Siva_2017, Ho_Instability} demonstrate that $\theta_{n, \text{GD}}^t$ converge to a neighborhood of the true parameter $\theta^{*}$ with the optimal statistical radius $\mathcal{O}((d/n)^{1/2})$ after $\mathcal{O}(\log(n/d))$ number of iterations. The logarithmic number of iterations is a direct consequence of the linear convergence of fixed step size GD algorithm for solving the strongly convex population loss function~\eqref{eq:population_unconstrained}. When the Hessian matrix of $f$ at $\theta^{*}$ is 
singular, i.e., $\text{det}(\nabla^2 f(\theta^{*})) = 0$, which we refer to as \emph{singular statistical models}, $\theta_{n,\text{GD}}^{t}$ can only converge to a neighborhood of $\theta^*$ with the statistical radius larger than $\mathcal{O}((d/n)^{1/2})$ and the iteration complexity becomes polynomial in $n$. In particular, the work of~\cite{Ho_Instability} demonstrates that when the optimization rate of fixed-step size population GD iterates for solving population loss function~\eqref{eq:population_unconstrained} is at the order of $1/ t^{1/\alpha^\prime}$ for some $\alpha^\prime > 0$, 
and the noise magnitude between $\nabla f_{n}(\theta)$ and $\nabla f(\theta)$ is at the order of $\mathcal{O}(r^{\gamma^\prime}(d/n)^{1/2})$ for some $\alpha^\prime \geq \gamma^\prime$ as long as $\|\theta - \theta^{*}\| \leq r$, then the statistical rate of fixed-step size sample GD iterates $\|\theta_{n, \text{GD}}^{t} - \theta^{*}\|$ is $\mathcal{O}((d/n)^{\frac{1}{2(\alpha^\prime + 1 - \gamma^\prime)}})$ after $\mathcal{O}((n/d)^{\frac{\alpha^\prime}{2(\alpha^\prime + 1 - \gamma^\prime)}})$ number of iterations. Given that the per iteration cost of fixed-step size GD is $\mathcal{O}(nd)$, the total computational complexity of fixed-step size GD for solving singular statistical models is $\mathcal{O}(n^{1 + \frac{\alpha^\prime}{2(\alpha^\prime + 1 - \gamma^\prime)}})$ for fixed dimension $d$, which is much more expensive than the optimal computational complexity $\mathcal{O}(n)$. 

\vspace{0.5 em}
\noindent
\textbf{Contribution.} In this paper, to improve the computational complexity of the fixed-step size GD algorithm, we consider the utilization of the second-order information in the design of optimization algorithms. In particular, we study the statistical guarantee of \emph{normalized gradient descent (NormGD) algorithm}, which is a variant of gradient descent algorithm whose step size is scaled by the maximum eigenvalue of the Hessian matrix of the sample loss function, for solving parameter estimation in parametric statistical models. We demonstrate that we are able to obtain the optimal computational complexity $\mathcal{O}(n)$ for fixed dimension $d$ under several settings of (singular) statistical models. Our results can be summarized as follows:
\begin{enumerate}
\item \textbf{General theory:} We study the computational and statistical complexities of NormGD iterates when the population loss function is homogeneous in all directions and the stability of first-order and second-order information holds. In particular, when the population loss function $f$ is homogeneous with all fast directions, i.e., it is locally strongly convex and smooth, and the concentration bounds between the gradients and Hessian matrices of the sample and population loss functions are at the order of $\mathcal{O}((d/n)^{1/2})$, then the NormGD iterates reach the final statistical radius $\mathcal{O}((d/n)^{1/2})$ after $\log(n)$ number of iterations. When the function $f$ is homogeneous, which corresponds to singular statistical models, with the fastest and slowest directions are at the order of $\|\theta - \theta^{*}\|^{\alpha}$ for some $\alpha > 0$, and the concentration bound between Hessian matrices of the sample and population loss functions is $\mathcal{O}(r^{\gamma} (d/n)^{1/2})$ for some $\gamma \geq 0$ and $\alpha \geq \gamma  + 1$, then the NormGD iterates converge to a radius $\mathcal{O}((d/n)^{\frac{1}{2(\alpha - \gamma)}}$ within the true parameter after $\log(n)$ number of iterations. Therefore, for fixed dimension $d$ the total computational complexity of NormGD to reach the final statistical radius is at the order of $\mathcal{O}(n \log(n))$, which is cheaper than that of the fixed step size GD, which is of the order of $\mathcal{O}(n^{1 + \frac{\alpha}{2(\alpha - \gamma)}})$. Details of these results are in Theorem~\ref{theorem:statistical_rate_homogeneous_settings} and Proposition~\ref{proposition:fixed_step_GD}.

\item \textbf{Examples:} We illustrate the general theory for the statistical guarantee of NormGD under two popular statistical models: generalized linear models (GLM) and Gaussian mixture models (GMM). For GLM, we consider the settings when the link function $g(r) = r^{p}$ for $p \in \mathbb{N}$ and $p \geq 2$. We demonstrate that for the strong signal-to-noise regime, namely, when the norm of the true parameter is sufficiently large, the NormGD iterates reach the statistical radius $\mathcal{O}((d/n)^{1/2})$ around the true parameter after $\log(n)$ number of iterations. On the other hand, for the low signal-to-noise regime of these generalized linear models, specifically, we assume the true parameter to be 0, the statistical radius of NormGD updates is $\mathcal{O}((d/n)^{1/2p})$ and it is achieved after $\log(n)$ number of iterations. Moving to the GMM, we specifically consider the symmetric two-component location setting, which has been considered widely to study the statistical behaviors of Expectation-Maximization (EM) algorithm~\cite{Siva_2017, Raaz_Ho_Koulik_2020}. We demonstrate that the statistical radius of NormGD iterates under strong and low signal-to-noise regimes are respectively $\mathcal{O}((d/n)^{1/2})$ and $\mathcal{O}((d/n)^{1/4})$. Both of these results are obtained after $\log(n)$ number of iterations.
\end{enumerate}
To the best of our knowledge, our results of NormGD in the paper are the first attempt to leverage second-order information to improve the computational complexity of optimization algorithms for solving parameter estimation in statistical models. Furthermore, we wish to remark that there are potentially more efficient algorithms than NormGD by employing more structures of the Hessian matrix, such as using the trace of the Hessian matrix as the scaling factor of the GD algorithm. We leave a detailed development for such direction in future work. 

\vspace{0.5 em}
\noindent
\textbf{Related works.} Recently, Ren et al.~\cite{Tongzheng_2022} proposed using Polyak step size GD algorithm to obtain the optimal computational complexity $\mathcal{O}(n)$ for reaching the final statistical radius in statistical models. They demonstrated that for locally strongly convex and smooth population loss functions, the Polyak step size GD iterates reach the similar statistical radius $\mathcal{O}((d/n)^{1/2})$ as that of the fixed-step size GD with similar iteration complexity $\log(n)$. For the singular statistical settings, when the population loss function satisfies the generalized smoothness and generalized Łojasiewicz property, which are characterized by some constant $\alpha^\prime > 0$, and the deviation bound between the gradients of sample and population loss functions is $\mathcal{O}(r^{\gamma^\prime}(d/n)^{1/2})$ for some $\alpha^\prime \geq \gamma^\prime$, then the statistical rate of Polyak step size GD iterates is $\mathcal{O}((d/n)^{\frac{1}{2(\alpha^\prime + 1 - \gamma^\prime)}})$ after $\mathcal{O}(\log(n))$ number of iterations. Therefore, for fixed dimension $d$, the total computational complexity of Polyak step size GD algorithm for reaching the final statistical radius is $\mathcal{O}(n)$. Even though this complexity is comparable to that of NormGD algorithm, the Polyak step size GD algorithm requires the knowledge of the optimal value of the sample loss function, i.e., $\min_{\theta \in \mathbb{R}^{d}} f_{n}(\theta)$, which is not always simple to estimate. 

\vspace{0.5 em}
\noindent
\textbf{Organization.} The paper is organized as follows. In Section~\ref{sec:general_theory_normalized_GD} and Appendix~\ref{sec:homogenenous_fast}, we provide a general theory for the statistical guarantee of the NormGD algorithm for solving parameter estimation in parametric statistical models when the population loss function is homogeneous. We illustrate the general theory with generalized linear models and mixture models in Section~\ref{sec:examples}. We conclude the paper with a few discussions in Section~\ref{sec:discussion}. Finally, proofs of the general theory are in Appendix~\ref{sec:proof_main} while proofs of the examples are in the remaining appendices in the supplementary material.

\vspace{0.5 em}
\noindent
\textbf{Notation.} For any $n \in \mathbb{N}$, we denote $[n] = \{1, 2, \ldots, n\}$. For any matrix $A \in \mathbb{R}^{d \times d}$, we denote $\lambda_{\max}(A)$, $\lambda_{\min}(A)$ respectively the maximum and minimum eigenvalues of the matrix $A$. Throughout the paper, $\|\cdot\|$ denotes the $\ell_{2}$ norm of some vector while $\|\cdot\|_{\text{op}}$ denotes the operator norm of some matrix. For any two sequences $\{a_{n}\}_{n \geq 1}, \{b_{n}\}_{n \geq 1}$, the notation $a_{n} = \mathcal{O}(b_{n})$ is equivalent to $a_{n} \leq C b_{n}$ for all $n \geq 1$ where $C$ is some universal constant.

\section{General Theory of Normalized Gradient Descent}
\label{sec:general_theory_normalized_GD}
In this section, we provide statistical and computational complexities of NormGD updates for homogeneous settings when all the directions of the population loss function $f$ have similar behaviors. For the inhomogeneous population loss function, to the best of our knowledge, the theories for these settings are only for specific statistical models~\cite{Raaz_Ho_Koulik_2018_second, Zhuo_2021}. The general theory for these settings is challenging and hence we leave this direction for future work. To simplify the ensuing presentation, we denote the NormGD iterates for solving the samples and population losses functions~\eqref{eq:sample_unconstrained} and~\eqref{eq:population_unconstrained} as follows:
\begin{align*}
    \theta_{n}^{t + 1} & : = F_{n}(\theta_{n}^{t}) = \theta_{n}^{t} - \frac{\eta}{\lambda_{\max}(\nabla^2 f_{n}(\theta_{n}^{t}))} \nabla f_{n}(\theta_{n}^{t}), \\
    \theta^{t + 1} & : = F(\theta^{t}) = \theta^{t} - \frac{\eta}{\lambda_{\max}(\nabla^2 f(\theta^{t}))} \nabla f(\theta^{t}).
\end{align*}
where $F_{n}$ and $F$ are the sample and population NormGD operators. Furthermore, we call $\theta_{n}^{t}$ and $\theta^{t}$ as the sample and population NormGD iterates respectively. 

For the homogeneous setting when all directions are fast, namely, when the population loss function is locally strongly convex, we defer the general theory of these settings to Appendix~\ref{sec:homogenenous_fast}. Here, we only consider the homogeneous settings where all directions are slow. To characterize the homogeneous settings, we assume that the population loss function $f$ is locally convex in $\mathbb{B}(\theta^{*}, r)$ for some given radius $r$. Apart from the local convexity assumption, we also utilize the following assumption on the population loss function $f$.
\begin{enumerate}[label=(W.1)] 
\item \label{assump:homogeneous_property} (Homogeneous Property) 
Given the constant $\alpha > 0$ and the radius $r > 0$, for all $\theta\in \mathbb{B}(\theta^*, r)$ we have
\begin{align*}
    \lambda_{\min}(\nabla^2 f(\theta)) \geq & c_1 \|\theta - \theta^*\|^{\alpha}, \\
    \lambda_{\max}(\nabla^2 f(\theta)) \leq & c_2 \|\theta - \theta^*\|^{\alpha},
\end{align*}
where $c_{1} > 0$ and $c_2>0$ are some universal constants depending on $r$.
\end{enumerate}
The condition $\alpha > 0$ is to ensure that the Hessian matrix is singular at the true parameter $\theta^{*}$. For the setting $\alpha = 0$, corresponding to the locally strongly convex setting, the analysis of NormGD is in Appendix~\ref{sec:homogenenous_fast}. A simple example of Assumption~\ref{assump:homogeneous_property} is $f(\theta) = \|\theta - \theta^{*}\|^{\alpha + 2}$ for all $\theta \in \mathbb{B}(\theta^{*}, r)$. The Assumption~\ref{assump:homogeneous_property} is satisfied by several statistical models, such as low signal-to-noise regime of generalized linear models with polynomial link functions (see Section~\ref{sec:example_glm}) and symmetric two-component mixture model when the the true parameter is close to 0 (see Section~\ref{sec:Gaussian_mixture}). The homogeneous assumption~\ref{assump:homogeneous_property} was also considered before to study the statistical and computational complexities of optimization algorithms~\cite{Tongzheng_2022}.

\vspace{0.5 em}
\noindent
\textbf{Statistical rate of sample NormGD iterates $\theta_{n}^{t}$:} To establish the statistical and computational complexities of sample NormGD updates $\theta_{n}^{t}$, we utilize the population to sample analysis~\cite{yi2015regularized, Siva_2017, Ho_Instability, Kwon_minimax}. In particular, an application of triangle inequality leads to
\begin{align}
    \|\theta_{n}^{t + 1} - \theta^{*}\| & \leq \|F_{n}(\theta_{n}^{t}) - F(\theta_{n}^{t})\| + \|F(\theta_{n}^{t}) - \theta^{*}\| \nonumber \\
    & = A + B. \label{eq:population_to_sample}
\end{align}
Therefore, the statistical radius of $\theta_{n}^{t + 1}$ around $\theta^{*}$ is controlled by two terms: (1) Term A: the uniform concentration of the sample NormGD operator $F_{n}$ around the population GD operator $F$; (2) Term B: the contraction rate of population NormGD operator.

For term B in equation~\eqref{eq:population_to_sample}, the homogeneous assumption~\ref{assump:homogeneous_property} entails the following contraction rate of population NormGD operator.
\begin{lemma} \label{lemma:optimization_homogeneous}
Assume Assumption~\ref{assump:homogeneous_property} holds for some $\alpha > 0$ and some universal constants $c_1, c_2$. Then, if the step-size $\eta \leq \frac{c_1^2}{2c_2^2}$, then we have that
\begin{align*}
    \|F(\theta) - \theta^*\| \leq \kappa\|\theta - \theta^*\|,
\end{align*}
where $\kappa < 1$ is a universal constant that only depends on $\eta, c_1, c_2, \alpha$.
\end{lemma}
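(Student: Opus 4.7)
My plan is to exploit the fact that, since $\theta^\ast$ is a local minimum of the locally convex function $f$ on $\mathbb{B}(\theta^\ast, r)$, we have $\nabla f(\theta^\ast) = 0$. Setting $u = \theta - \theta^\ast$, the fundamental theorem of calculus then gives
\begin{align*}
\nabla f(\theta) \;=\; \left(\int_0^1 \nabla^2 f(\theta^\ast + s u)\,ds\right) u \;=:\; H(\theta)\,u.
\end{align*}
This turns the analysis of $F(\theta) - \theta^\ast$ into a linear-algebra question, because
\begin{align*}
F(\theta) - \theta^\ast \;=\; \Bigl(I - \tfrac{\eta}{\lambda(\theta)}\,H(\theta)\Bigr)(\theta - \theta^\ast),
\end{align*}
where $\lambda(\theta) := \lambda_{\max}(\nabla^2 f(\theta))$.

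Next, I would apply Assumption~\ref{assump:homogeneous_property} in two ways. First, integrating the pointwise eigenvalue bounds on the segment from $\theta^\ast$ to $\theta$ gives
\begin{align*}
\tfrac{c_1}{\alpha+1}\|u\|^\alpha \;\leq\; \lambda_{\min}(H(\theta)) \;\leq\; \lambda_{\max}(H(\theta)) \;\leq\; \tfrac{c_2}{\alpha+1}\|u\|^\alpha,
\end{align*}
using $\int_0^1 s^\alpha ds = 1/(\alpha+1)$. Second, applying (W.1) directly at $\theta$ gives $c_1\|u\|^\alpha \leq \lambda(\theta) \leq c_2\|u\|^\alpha$ (the lower bound coming from $\lambda_{\max}\geq \lambda_{\min}$). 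Since $H(\theta)$ and $I$ are simultaneously diagonalizable in the symmetric sense, every eigenvalue of the matrix $I - (\eta/\lambda(\theta))H(\theta)$ has the form $1 - \eta\mu/\lambda(\theta)$ with $\mu \in [\lambda_{\min}(H),\lambda_{\max}(H)]$, so
\begin{align*}
\tfrac{\eta c_1}{(\alpha+1)c_2} \;\leq\; \tfrac{\eta\,\mu}{\lambda(\theta)} \;\leq\; \tfrac{\eta c_2}{(\alpha+1)c_1}.
\end{align*}
Note the $\|u\|^\alpha$ cancels — this is the crucial reason the NormGD rescaling kills the degeneracy.

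Finally, I would check that under $\eta \leq c_1^2/(2c_2^2)$ both endpoints lie strictly in $(0,2)$, so that $|1-\eta\mu/\lambda(\theta)|<1$. Since $c_1 \leq c_2$, the upper endpoint satisfies $\eta c_2/((\alpha+1)c_1) \leq c_1/(2c_2(\alpha+1)) \leq 1/2 < 2$, while the lower endpoint is manifestly positive. Taking the operator norm gives
\begin{align*}
\|F(\theta) - \theta^\ast\| \;\leq\; \kappa\,\|\theta - \theta^\ast\|, \qquad \kappa \;:=\; 1 - \tfrac{\eta c_1}{(\alpha+1)c_2} \;<\; 1,
\end{align*}
which depends only on $\eta, c_1, c_2, \alpha$, as claimed.

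The argument is essentially mechanical once the integral representation of $\nabla f$ is in place; the only subtle point — and the one I would verify most carefully — is that the step-size condition $\eta \leq c_1^2/(2c_2^2)$ is indeed strong enough to guarantee $\eta c_2/((\alpha+1)c_1) < 2$, which requires using $c_1 \leq c_2$ so that the prescribed step size is at most $\tfrac12$. Beyond that, there is nothing delicate: the cancellation of $\|u\|^\alpha$ between $H(\theta)$ and $\lambda(\theta)$ is what makes the contraction constant independent of $\|\theta-\theta^\ast\|$, which is exactly the phenomenon that normalization buys over fixed step-size GD in the singular regime.
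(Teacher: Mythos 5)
Your proof is correct, and it takes a genuinely different route from the paper's. The paper works with the squared distance: it expands $\|F(\theta)-\theta^*\|^2$, invokes convexity to lower-bound $\langle \nabla f(\theta),\theta-\theta^*\rangle$ by $f(\theta)-f(\theta^*)$, proves a separate auxiliary lemma showing $f(\theta)-f(\theta^*)\geq \frac{c_1\|\theta-\theta^*\|^{\alpha+2}}{(\alpha+1)(\alpha+2)}$, and separately upper-bounds $\|\nabla f(\theta)\|\leq \frac{c_2}{\alpha+1}\|\theta-\theta^*\|^{\alpha+1}$ by integrating the Hessian along the segment. You instead write $F(\theta)-\theta^*=\bigl(I-\tfrac{\eta}{\lambda(\theta)}H(\theta)\bigr)(\theta-\theta^*)$ with $H(\theta)=\int_0^1\nabla^2 f(\theta^*+s(\theta-\theta^*))\,ds$ and bound the operator norm of this symmetric matrix spectrally; the cancellation of $\|\theta-\theta^*\|^\alpha$ between $H(\theta)$ and $\lambda(\theta)$ does all the work. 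Both arguments need $\nabla f(\theta^*)=0$ (the paper uses it inside its auxiliary lemma) and both need $c_1\leq c_2$, which is forced by Assumption (W.1) since $\lambda_{\min}\leq\lambda_{\max}$. What your route buys: it is shorter, needs no auxiliary lemma and no appeal to convexity beyond the Hessian bounds themselves, and produces an explicit contraction factor $\kappa=1-\tfrac{\eta c_1}{(\alpha+1)c_2}$ that is linear in $\eta$ and remains strictly below $1$ even at $\alpha=0$ with $\eta$ equal to the threshold $\tfrac{c_1^2}{2c_2^2}$ — a regime where the paper's parenthesized term $\tfrac{c_1}{(\alpha+1)(\alpha+2)}-\tfrac{\eta c_2^2}{c_1(\alpha+1)^2}$ degenerates to zero and yields only $\kappa\leq 1$. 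The paper's energy-based argument, on the other hand, extends more readily to settings where one only has a Łojasiewicz-type inequality rather than two-sided eigenvalue control of the Hessian along the whole segment. Your verification that $\eta\leq \tfrac{c_1^2}{2c_2^2}$ forces $\tfrac{\eta c_2}{(\alpha+1)c_1}\leq\tfrac12$ is the right place to be careful, and it checks out.
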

The proof of Lemma~\ref{lemma:optimization_homogeneous} is in Appendix~\ref{sec:proof:theorem:optimization_homogeneous}. For term A in equation~\eqref{eq:population_to_sample}, the uniform concentration bound between $F_{n}$ and $F$, it can be obtained via the following assumption on the concentration bound of the operator norm of $\nabla^2 f_{n}(\theta) - \nabla^2 f(\theta)$ as long as $\|\theta - \theta^{*}\| \leq r$.
\begin{enumerate}[label=(W.2)]
\item\label{assump:concentration_bound} (Stability of Second-order Information) For a given parameter $\gamma \geq 0$, there exist a noise function $\varepsilon: \mathbb{N} \times (0,1] \to \mathbb{R}^{+}$, universal constant $c_3 > 0$, and some positive parameter $\rho > 0$ such that 
\begin{align*}
    \sup_{\theta\in \mathbb{B}(\theta^*, r)} \|\nabla^{2} f_{n}(\theta) - \nabla^2 f(\theta)\|_{\text{op}} \leq c_3 r^\gamma \varepsilon(n, \delta),
\end{align*}
for all $r \in (0, \rho)$ with probability $1 - \delta$.
\end{enumerate}
To the best of our knowledge, the stability of second-order information in Assumption~\ref{assump:concentration_bound} is novel and has not been considered before to analyze the statistical guarantee of optimization algorithms. The idea of Assumption~\ref{assump:concentration_bound} is to control the growth of noise function, which is the difference between the population and sample loss functions, via the second-order information of these loss functions. A simple example for Assumption~\ref{assump:concentration_bound} is when $f_{n}(\theta) = \frac{\|\theta\|^{2p}}{2p} - \omega \frac{\|\theta\|^{2q}}{2q} \sqrt{\frac{d}{n}}$ where $\omega \sim \mathcal{N}(0, 1)$ and $p, q$ are some positive integer numbers such that $p > q$. Then, $f(\theta) = \|\theta\|^{2p}/ 2p$. The Assumption~\ref{assump:concentration_bound} is satisfied with $\gamma = 2q - 2$ and with the noise function $\varepsilon(n, \delta) = \sqrt{\frac{d \log(1/\delta)}{n}}$. For concrete statistical examples, we demonstrate later in Section~\ref{sec:examples} that Assumption~\ref{assump:concentration_bound} is satisfied by generalized linear model and mixture model. 

Given Assumption~\ref{assump:concentration_bound}, we have the following uniform concentration bound between the sample NormGD operator $F_{n}$ and population NormGD operator $F$.
\begin{lemma} \label{lemma:concentration_bound_population_sample}
Assume that Assumptions~\ref{assump:homogeneous_property} and~\ref{assump:concentration_bound} hold with $\alpha \geq \gamma + 1$. Furthermore, assume that $\nabla f_{n}(\theta^{*}) = 0$. Then, we obtain that
\begin{align*}
    \sup_{\theta \in \mathbb{B}(\theta^{*}, r) \backslash \mathbb{B}(\theta^{*}, r_{n})} \|F_n(\theta)-F(\theta)\| \leq c_4 r^{\gamma + 1 - \alpha} \varepsilon(n, \delta),
\end{align*}
where $r_n : = \left(\frac{6c_3 \varepsilon(n, \delta)}{c_1}\right)^{\frac{1}{\alpha - \gamma}}$, and $c_4$ is a universal constant depends on $\eta, c_1, c_2, c_3, \alpha, \gamma$.
\end{lemma}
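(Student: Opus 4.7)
The plan is to decompose the difference $F_n(\theta) - F(\theta)$ into a gradient-discrepancy piece and a top-eigenvalue-discrepancy piece, bound each ingredient using Assumptions~\ref{assump:homogeneous_property} and~\ref{assump:concentration_bound}, and use the exclusion of $\mathbb{B}(\theta^*, r_n)$ to keep the empirical normalization $\lambda_{\max}(\nabla^2 f_n(\theta))$ bounded away from zero. Concretely, I would begin from the algebraic identity
\begin{align*}
\frac{\nabla f_n(\theta)}{\lambda_{\max}(\nabla^2 f_n(\theta))} - \frac{\nabla f(\theta)}{\lambda_{\max}(\nabla^2 f(\theta))} = \frac{\nabla f_n(\theta) - \nabla f(\theta)}{\lambda_{\max}(\nabla^2 f_n(\theta))} + \frac{\lambda_{\max}(\nabla^2 f(\theta)) - \lambda_{\max}(\nabla^2 f_n(\theta))}{\lambda_{\max}(\nabla^2 f_n(\theta))\,\lambda_{\max}(\nabla^2 f(\theta))}\,\nabla f(\theta),
\end{align*}
multiply by $\eta$, and control each of the two resulting pieces separately via triangle inequality.

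For the gradient quantities, set $r = \|\theta - \theta^*\|$. Since $\nabla f_n(\theta^*) = 0$ by assumption and $\nabla f(\theta^*) = 0$ because \ref{assump:homogeneous_property} makes $\theta^*$ a strict local minimizer of the locally convex $f$, the fundamental theorem of calculus gives
$\nabla f_n(\theta) - \nabla f(\theta) = \int_0^1 [\nabla^2 f_n - \nabla^2 f](\theta^* + t(\theta-\theta^*))(\theta-\theta^*)\,dt$, and analogously for $\nabla f(\theta)$. Applying \ref{assump:concentration_bound} to each sub-ball of radius $tr$ and (W.1) to the ordinary Hessian then yields $\|\nabla f_n(\theta) - \nabla f(\theta)\| \leq \frac{c_3}{\gamma+1}\, r^{\gamma+1}\,\varepsilon(n,\delta)$ and $\|\nabla f(\theta)\| \leq \frac{c_2}{\alpha+1}\, r^{\alpha+1}$. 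For the top eigenvalue, Weyl's inequality combined with~\ref{assump:concentration_bound} gives $|\lambda_{\max}(\nabla^2 f_n(\theta)) - \lambda_{\max}(\nabla^2 f(\theta))| \leq c_3 r^\gamma \varepsilon(n,\delta)$.

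Next I would lower-bound the denominators. Assumption~\ref{assump:homogeneous_property} directly gives $\lambda_{\max}(\nabla^2 f(\theta)) \geq \lambda_{\min}(\nabla^2 f(\theta)) \geq c_1 r^\alpha$. The definition of $r_n = (6c_3 \varepsilon(n,\delta)/c_1)^{1/(\alpha-\gamma)}$ is precisely equivalent to $c_1 r^\alpha \geq 6c_3 r^\gamma \varepsilon(n,\delta)$ for $r \geq r_n$ (using $\alpha - \gamma \geq 1 > 0$ and $r \geq r_n$), so by Weyl $\lambda_{\max}(\nabla^2 f_n(\theta)) \geq c_1 r^\alpha - c_3 r^\gamma \varepsilon(n,\delta) \geq \tfrac{5}{6}c_1 r^\alpha$. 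Substituting these four bounds into the decomposition, the first term is of order $\eta \cdot \frac{c_3 r^{\gamma+1}\varepsilon/(\gamma+1)}{(5/6)c_1 r^\alpha}$ and the second of order $\eta \cdot \frac{c_2 r^{\alpha+1}/(\alpha+1) \cdot c_3 r^\gamma \varepsilon}{(5/6)c_1^2 r^{2\alpha}}$; both are $O(r^{\gamma+1-\alpha}\varepsilon(n,\delta))$, and collecting constants produces an explicit $c_4 = c_4(\eta, c_1, c_2, c_3, \alpha, \gamma)$ as claimed.

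The main obstacle is the denominator $\lambda_{\max}(\nabla^2 f_n(\theta))$: near $\theta^*$ the population quantity $c_1 r^\alpha$ and the sampling fluctuation $c_3 r^\gamma \varepsilon$ both vanish, and without the right separation the reciprocal can behave wildly. The hypothesis $\alpha \geq \gamma+1$ together with the threshold $r_n$ is exactly what makes the signal dominate the noise by a fixed constant factor (here $6$), so that the reciprocal can be controlled uniformly over the shell $\mathbb{B}(\theta^*, r) \setminus \mathbb{B}(\theta^*, r_n)$; the rest of the argument is essentially triangle-inequality bookkeeping.
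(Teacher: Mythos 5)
Your proposal follows essentially the same route as the paper's proof in Appendix B.2: the identical splitting of $F_n(\theta)-F(\theta)$ into a gradient-discrepancy term and a $\lambda_{\max}$-discrepancy term, the same integral (fundamental theorem of calculus) bound $\|\nabla f_n(\theta)-\nabla f(\theta)\|\leq \frac{c_3}{\gamma+1}r^{\gamma+1}\varepsilon(n,\delta)$ using $\nabla f_n(\theta^*)=\nabla f(\theta^*)=0$, the same Weyl-type control of $|\lambda_{\max}(\nabla^2 f_n)-\lambda_{\max}(\nabla^2 f)|$, and the same use of $r\geq r_n$ to keep $\lambda_{\max}(\nabla^2 f_n(\theta))\gtrsim c_1 r^{\alpha}$. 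The only differences are immaterial constants (your Weyl factor $1$ and threshold $5/6$ versus the paper's $3$ and $1/2$), so the argument is correct and matches the paper.
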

The proof of Lemma~\ref{lemma:concentration_bound_population_sample} is in Appendix~\ref{sec:proof:lemma:concentration_bound_population_sample}. We have a few remarks with Lemma~\ref{lemma:concentration_bound_population_sample}. First, the assumption that $\nabla f_{n}(\theta^{*}) = 0$ is to guarantee the stability of $\nabla f_{n}(\theta)$ around $\nabla f(\theta)$ as long as $\|\theta - \theta^{*}\| \leq r$ for any $r > 0$~\cite{Ho_Instability}. This assumption is satisfied by several models, such as low signal-to-noise regimes of generalized linear model and Gaussian mixture models in Section~\ref{sec:examples}. Second, the assumption that $\alpha \geq \gamma + 1$ means that the signal is stronger than the noise in statistical models, which in turn leads to meaningful statistical rates. Third, the inner radius $r_{n}$ in Lemma~\ref{lemma:concentration_bound_population_sample} corresponds to the final statistical radius, which is at the order $\mathcal{O}(\varepsilon(n, \delta)^{\frac{1}{\alpha - \gamma}})$. It means that we cannot go beyond that radius, or otherwise the empirical Hessian is not positive definite. 

Based on the contraction rate of population NormGD operator in Lemma~\ref{lemma:optimization_homogeneous} and the uniform concentration of the sample NormGD operator around the population NormGD operator in Lemma~\ref{lemma:concentration_bound_population_sample}, we have the following result on the statistical and computational complexities of the sample NormGD iterates around the true parameter $\theta^{*}$.
\begin{theorem} \label{theorem:statistical_rate_homogeneous_settings}
Assume that Assumptions~\ref{assump:homogeneous_property} and~\ref{assump:concentration_bound} and assumptions in Lemma~\ref{lemma:concentration_bound_population_sample} hold with $\alpha \geq \gamma + 1$. Assume that the sample size $n$ is large enough such that $\varepsilon(n, \delta)^{\frac{1}{\alpha - \gamma}} \leq \frac{(1 - \kappa) r}{c_{4} \bar{C}^{\gamma + 1 - \alpha}}$ where $\kappa$ is defined in Lemma~\ref{lemma:optimization_homogeneous}, $c_{4}$ is the universal constant in Lemma~\ref{lemma:concentration_bound_population_sample} and $\bar{C} = (\frac{6c_3}{c_1})^{\frac{1}{\alpha -\gamma}}$, and $r$ is the local radius. Then, there exist universal constants $C_{1}$, $C_{2}$ such that with probability $1 - \delta$, for $t \geq C_{1}\log (1/\varepsilon(n, \delta))$, the following holds:
\begin{align*}
    \min_{k\in \{0, 1, \cdots, t\}}\|\theta_n^k - \theta^*\| \leq C_{2} \cdot \varepsilon(n, \delta)^{\frac{1}{\alpha  - \gamma}}.
\end{align*}
\end{theorem}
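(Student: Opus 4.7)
The plan is to combine the population contraction of Lemma~\ref{lemma:optimization_homogeneous} with the uniform concentration bound of Lemma~\ref{lemma:concentration_bound_population_sample} through the population-to-sample decomposition~\eqref{eq:population_to_sample}. Writing $D_k := \|\theta_n^{k} - \theta^*\|$, the two lemmas together yield the one-step recursion
\begin{align*}
D_{k+1} \;\leq\; \kappa\, D_k \;+\; c_4\, D_k^{\gamma+1-\alpha}\, \varepsilon(n,\delta),
\end{align*}
valid whenever $D_k$ lies in the annulus $[r_n, r]$ on which both lemmas apply.

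First I would introduce the threshold $\widetilde{r}_n := \bigl(\tfrac{2 c_4}{1-\kappa}\bigr)^{1/(\alpha-\gamma)} \varepsilon(n,\delta)^{1/(\alpha-\gamma)}$, which is exactly of the desired order $\varepsilon(n,\delta)^{1/(\alpha-\gamma)}$ and, under the sample size hypothesis (after possibly enlarging $c_4$), satisfies $r_n \leq \widetilde{r}_n \leq r$. Using $\alpha \geq \gamma+1$ so the exponent $\gamma+1-\alpha$ is nonpositive, the noise term $c_4 D_k^{\gamma+1-\alpha}\varepsilon(n,\delta)$ is nonincreasing in $D_k$ on $[\widetilde{r}_n, r]$ and is bounded above by $\tfrac{1-\kappa}{2} D_k$ whenever $D_k \geq \widetilde{r}_n$. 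Hence on that annulus the recursion collapses to $D_{k+1} \leq \tfrac{1+\kappa}{2} D_k$, a strict geometric contraction.

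Next, I would show by induction on $k$ that either some iterate $\theta_n^{k}$ with $k \leq t$ already achieves $D_k \leq \widetilde{r}_n$, or else $D_k$ stays in $[\widetilde{r}_n, r]$ for every $k \leq t$. The invariant $D_k \leq r$ is preserved because, in the second case, $D_{k+1} \leq \tfrac{1+\kappa}{2} D_k \leq \tfrac{1+\kappa}{2} r < r$. Iterating the geometric contraction from $D_0 \leq r$ then forces $D_k < \widetilde{r}_n$ after at most $\mathcal{O}\bigl(\log(r/\widetilde{r}_n)\bigr) = \mathcal{O}\bigl(\log(1/\varepsilon(n,\delta))\bigr)$ steps. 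Therefore for every $t \geq C_1 \log(1/\varepsilon(n,\delta))$ we may conclude $\min_{k \leq t} D_k \leq \widetilde{r}_n \leq C_2\, \varepsilon(n,\delta)^{1/(\alpha-\gamma)}$, as claimed.

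The main subtlety is the behavior once an iterate enters the inner ball $\mathbb{B}(\theta^*, r_n)$, inside which Lemma~\ref{lemma:concentration_bound_population_sample} no longer controls the sample operator (indeed the empirical Hessian may fail to be positive definite there). This is precisely why the conclusion involves $\min_{k \leq t} D_k$ rather than the last iterate: the argument only needs a single $k$ to reach the statistical radius, and the geometric contraction above delivers this in the required number of steps without any need to track what the algorithm does after crossing into $\mathbb{B}(\theta^*, r_n)$. Aside from this, the remaining work is bookkeeping the universal constants so that $r_n \leq \widetilde{r}_n \leq r$ under the stated hypothesis on $\varepsilon(n,\delta)$, which is where the explicit form $\bar C = (6c_3/c_1)^{1/(\alpha-\gamma)}$ and the factor $(1-\kappa)$ in the sample size condition are used.
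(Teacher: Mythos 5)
Your argument is correct and is exactly the route the paper intends: the population-to-sample decomposition~\eqref{eq:population_to_sample} combined with the contraction of Lemma~\ref{lemma:optimization_homogeneous} and the annulus concentration bound of Lemma~\ref{lemma:concentration_bound_population_sample}, iterated until the error drops to the inner radius $r_n \asymp \varepsilon(n,\delta)^{1/(\alpha-\gamma)}$ (the paper omits the details, deferring to part (b) of Theorem 2 of the cited instability paper, which is precisely this localization argument). Your handling of the $\min_k$ subtlety --- that nothing is controlled once an iterate enters $\mathbb{B}(\theta^*, r_n)$ --- matches the paper's own remark following Lemma~\ref{lemma:concentration_bound_population_sample}.
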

The proof of Theorem~\ref{theorem:statistical_rate_homogeneous_settings} follows the argument of part (b) of Theorem 2 in~\cite{Ho_Instability}; therefore, it is omitted. A few comments with Theorem~\ref{theorem:statistical_rate_homogeneous_settings} are in order. 

\vspace{0.5 em}
\noindent
\textbf{On the approximation of $\lambda_{\max}$:} Computing the whole spectrum of a $d\times d$ matrix requires $\mathcal{O}(d^3)$ computation. But fortunately, we can compute the maximum eigenvalue in $\mathcal{O}(d^2)$ computation with the well-known power iteration~\citep[\emph{a.k.a} power method, see Chapter 7.3,][]{golub1996matrix}) which has broad applications in different areas \citep[e.g.][]{hardt16}. Power iteration can compute the maximum eigenvalue up to $\varepsilon$ error with at most $\mathcal{O}\left(\frac{\log \varepsilon}{\log(\lambda_2 / \lambda_{\max})}\right)$ matrix vector products, where $\lambda_2$ is the second largest eigenvalue. Hence, when $\lambda_2/\lambda_{\max}$ is bounded away from $1$, we can obtain a high-quality approximation of $\lambda_{\max}$ with small number of computation. Things can be a little weird when $\lambda_2/\lambda_{\max}$ is close to $1$. But in fact, we only requires an approximation of $\lambda_{\max}$ within statistical accuracy defined in Assumption \ref{assump:concentration_bound}. Hence, without loss of generality, we can assume $\lambda_2(\nabla^2 f_n(\theta))\leq \lambda_{\max}(\nabla^2 f_n(\theta)) - c_3 \|\theta - \theta^*\|^\gamma \varepsilon(n, \delta)$, which means $\frac{\lambda_2(\nabla^2 f_n(\theta))}{ \lambda_{\max}(\nabla^2 f_n(\theta))} \leq 1 - \frac{c_3}{c_2}\|\theta - \theta^*\|^{\gamma - \alpha}$. Since $\alpha \geq \gamma + 1$ and we only consider the case $\|\theta - \theta^*\| \leq r$, we know their exists a universal constant $c_{\mathrm{PI}} < 1$ that does not depend on $n, d$, such that $\lambda_2 / \lambda_{\max} \leq c_{\mathrm{PI}}$. As a result, we can always compute the $\lambda_{\max}$ with small number of iterations.

\vspace{0.5 em}
\noindent
\textbf{Comparing to fixed-step size gradient descent:} Under the Assumptions~\ref{assump:homogeneous_property} and~\ref{assump:concentration_bound}, we have the following result regarding the statistical and computational complexities of fixed-step size GD iterates.
\begin{proposition}
\label{proposition:fixed_step_GD}
Assume that Assumptions~\ref{assump:homogeneous_property} and~\ref{assump:concentration_bound} hold with $\alpha \geq \gamma + 1$ and $\nabla f_{n}(\theta^{*}) = 0$. Suppose the sample size $n$ is large enough so that $\varepsilon(n, \delta) \leq C$ for some universal constant $C$. Then there exist universal constant $C_1$ and $C_2$, such that for any fixed $\tau \in \left(0, \frac{1}{\alpha - \gamma}\right)$, as long as $t \geq C_1 \varepsilon(n, \delta)^{-\frac{\alpha}{\alpha - \gamma}} \log \frac{1}{\tau}$, we have that
\begin{align*}
    \|\theta_{n, \text{GD}}^t - \theta^*\| \leq C_2 \varepsilon(n, \delta)^{\frac{1}{\alpha - \gamma} - \tau}.
\end{align*}
\end{proposition}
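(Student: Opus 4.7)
The plan is to mirror the population-to-sample decomposition used for NormGD, but with the fixed-step operators $F^{\mathrm{GD}}(\theta) \defn \theta - \eta\nabla f(\theta)$ and $F_n^{\mathrm{GD}}(\theta) \defn \theta - \eta\nabla f_n(\theta)$. The triangle inequality gives
\begin{align*}
    \|\theta_{n,\text{GD}}^{t + 1} - \theta^*\| \leq \|F_n^{\mathrm{GD}}(\theta_{n,\text{GD}}^t) - F^{\mathrm{GD}}(\theta_{n,\text{GD}}^t)\| + \|F^{\mathrm{GD}}(\theta_{n,\text{GD}}^t) - \theta^*\|,
\end{align*}
so the task reduces to a contraction bound on $F^{\mathrm{GD}}$ and a uniform deviation bound between $F_n^{\mathrm{GD}}$ and $F^{\mathrm{GD}}$. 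For the contraction term, integrating the Hessian bounds in Assumption~\ref{assump:homogeneous_property} along the segment $[\theta^*, \theta]$ produces $\langle \nabla f(\theta), \theta - \theta^*\rangle \geq \frac{c_1}{\alpha+1}\|\theta-\theta^*\|^{\alpha+2}$ and $\|\nabla f(\theta)\| \leq \frac{c_2}{\alpha+1}\|\theta-\theta^*\|^{\alpha+1}$. Expanding $\|F^{\mathrm{GD}}(\theta) - \theta^*\|^2$ and fixing $\eta$ sufficiently small (but independent of the iterate) then yields the polynomial-in-$r$ contraction
\begin{align*}
    \|F^{\mathrm{GD}}(\theta) - \theta^*\| \leq \|\theta - \theta^*\|\bigl(1 - c\,\|\theta - \theta^*\|^{\alpha}\bigr)
\end{align*}
on $\mathbb{B}(\theta^*, r)$, for a constant $c > 0$ depending on $c_1, c_2, \eta, \alpha$.

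For the deviation term, I would use $\nabla f_n(\theta^*) = 0 = \nabla f(\theta^*)$ together with the second-order stability Assumption~\ref{assump:concentration_bound}. Writing
\begin{align*}
    \nabla f_n(\theta) - \nabla f(\theta) = \int_0^1 \bigl[\nabla^2 f_n - \nabla^2 f\bigr]\bigl(\theta^* + s(\theta - \theta^*)\bigr)(\theta - \theta^*)\,ds,
\end{align*}
applying the uniform Hessian bound gives $\|\nabla f_n(\theta) - \nabla f(\theta)\| \leq c_3' \|\theta - \theta^*\|^{\gamma+1} \varepsilon(n,\delta)$, and therefore $\|F_n^{\mathrm{GD}}(\theta) - F^{\mathrm{GD}}(\theta)\| \leq c_4\,\eta\, \|\theta-\theta^*\|^{\gamma+1}\varepsilon(n,\delta)$. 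Combining the two bounds yields the one-step scalar recursion
\begin{align*}
    r_{t+1} \leq r_t\bigl(1 - c\,r_t^{\alpha} + c_4\,\eta\, \varepsilon(n,\delta)\,r_t^{\gamma}\bigr), \qquad r_t \defn \|\theta_{n,\text{GD}}^t - \theta^*\|.
\end{align*}

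The main step is to analyze this recursion. As long as $r_t$ stays above the noise floor $r_* \defn (2c_4\eta/c)^{1/(\alpha-\gamma)}\varepsilon(n,\delta)^{1/(\alpha-\gamma)}$, the signal term dominates and $r_{t+1} \leq r_t(1 - (c/2) r_t^{\alpha})$. Substituting $u_t \defn r_t^{-\alpha}$ and using $(1 - x)^{-\alpha} \geq 1 + \alpha x$ for $x \in (0,1)$ gives $u_{t+1} \geq u_t + c\alpha/2$, hence $r_t \leq (t c\alpha/2)^{-1/\alpha}$. If instead $r_t$ has already dropped below $r_*$, then $r_t \leq \varepsilon(n,\delta)^{1/(\alpha-\gamma)} \leq \varepsilon(n,\delta)^{1/(\alpha-\gamma) - \tau}$ automatically (since $\varepsilon(n,\delta) \leq 1$ for $n$ large). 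Plugging $t \geq C_1 \varepsilon(n,\delta)^{-\alpha/(\alpha-\gamma)}\log(1/\tau)$ into the polynomial-decay bound then produces $r_t \leq C_2\,\varepsilon(n,\delta)^{1/(\alpha-\gamma) - \tau}$ in either case.

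The hard part will be pinning down a single fixed step size $\eta$ that works uniformly over the entire trajectory: since the curvature vanishes as $r^{\alpha}$ near $\theta^*$, the ideal per-iteration step would shrink with $r$, but for fixed-step GD this is forbidden. One must therefore choose $\eta$ small once at the outset so that the quadratic $\eta^2\|\nabla f(\theta)\|^2$ term never dominates the linear descent term across all of $\mathbb{B}(\theta^*, r)$, and this is precisely the mechanism that forces the polynomial iteration count $\varepsilon(n,\delta)^{-\alpha/(\alpha-\gamma)}$ rather than the logarithmic count enjoyed by NormGD in Theorem~\ref{theorem:statistical_rate_homogeneous_settings}. Once the recursion above is in hand, the remaining inductive bookkeeping (establishing that iterates stay in $\mathbb{B}(\theta^*, r)$ with the claimed probability, and converting the deterministic scalar bound into the high-probability statement) is a direct adaptation of the argument in Theorem~2(b) of~\cite{Ho_Instability}.
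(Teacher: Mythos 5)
Your proposal is correct and follows essentially the route the paper intends: the paper omits this proof entirely, deferring to Proposition 1 of~\cite{Tongzheng_2022}, and that argument is exactly your population-to-sample decomposition with the first-order deviation bound $\|\nabla f_n(\theta)-\nabla f(\theta)\|\lesssim \|\theta-\theta^*\|^{\gamma+1}\varepsilon(n,\delta)$ derived from Assumption~\ref{assump:concentration_bound} (the same computation as in Lemma~\ref{lemma:concentration_bound_population_sample}) plus the sublinear recursion $r_{t+1}\leq r_t(1-c\,r_t^{\alpha})$ above the noise floor. The only detail you wave at is why the \emph{last} iterate (not just the minimum over iterates) stays small after crossing the noise floor $r_*$, but this follows from one more line of your recursion, since an iterate below $r_*$ can only rebound to $r_*\bigl(1+c_4\eta\varepsilon(n,\delta) r_*^{\gamma}\bigr)=O\bigl(\varepsilon(n,\delta)^{1/(\alpha-\gamma)}\bigr)$ before contracting again.
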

The proof of Proposition~\ref{proposition:fixed_step_GD} is similar to Proposition 1 in~\cite{Tongzheng_2022}, and we omit the proof here. Therefore, the results in Theorem~\ref{theorem:statistical_rate_homogeneous_settings} indicate that the NormGD and fixed-step size GD iterates reach the same statistical radius $\varepsilon(n, \delta)^{\frac{1}{\alpha - \gamma}}$ within the true parameter $\theta^{*}$. Nevertheless, the NormGD only takes $\mathcal{O}(\log (1/\varepsilon(n, \delta)))$ number of iterations while the fixed-step size GD takes $\mathcal{O}(\varepsilon(n, \delta)^{-\frac{\alpha}{\alpha - \gamma}})$ number of iterations. If the dimension $d$ is fixed, the total computational complexity of NormGD algorithm is at the order of $\mathcal{O}(n \cdot \log (1/\varepsilon(n, \delta)))$, which is much cheaper than that of fixed-step size GD,  $\mathcal{O}(n \cdot \varepsilon(n, \delta)^{-\frac{\alpha}{\alpha - \gamma}})$, to reach the final statistical radius.

\section{Examples}
\label{sec:examples}
In this section, we consider an application of our theories in previous section to the generalized linear model and Gaussian mixture model. 
\subsection{Generalized Linear Model (GLM)}
\label{sec:example_glm}
Generalized linear model (GLM) has been a widely used model in statistics and machine learning~\cite{Nelder_1972}. It is a generalization of linear regression model where we use a link function to relate the covariates to the response variable. In particular, we assume that $(Y_{1}, X_{1}), \ldots, (Y_{n}, X_{n}) \in \mathbb{R} \times \mathbb{R}^{d}$ satisfy
\begin{align}
    Y_{i} = g(X_{i}^{\top}\theta^{*}) + \varepsilon_{i}. \quad \quad \forall i \in [n] \label{eq:generalized_linear}
\end{align}
Here, $g: \mathbb{R} \to \mathbb{R}$ is a given link function, $\theta^{*}$ is a true but unknown parameter, and $\varepsilon_{1},\ldots,\varepsilon_{n}$ are i.i.d. noises from $\mathcal{N}(0, \sigma^2)$ where $\sigma > 0$ is a given variance parameter. We consider the random design setting where $X_{1}, \ldots, X_{n}$ are i.i.d. from $\mathcal{N}(0, I_{d})$. A few comments with our model assumption. First, in our paper, we will not estimate the link function $g$. Second, the assumption that the noise follows the Gaussian distribution is just for the simplicity of calculations; similar proof argument still holds for sub-Gaussian noise. For the purpose of our theory, we consider the link function $g(r) : = r^{p}$ for any $p \in \mathbb{N}$ and $p \geq 2$. When $p = 2$, the generalize linear model becomes the phase retrieval problem~\citep{Fienup_82,Shechtman_Yoav_etal_2015, candes_2011,Netrapalli_Prateek_Sanghavi_2015}.

\vspace{0.5 em}
\noindent
\textbf{Least-square loss:} We estimate the true parameter $\theta^{*}$ via minimizing the least-square loss function, which is:
\begin{align}
    \min_{\theta \in \mathbb{R}^{d}} \mathcal{L}_{n}(\theta) : = \frac{1}{2n} \sum_{i = 1}^{n} (Y_{i} - (X_{i}^{\top} \theta)^{p})^{2}. \label{eq:sample_loss_linear}
\end{align}
By letting the sample size $n$ goes to infinity, we obtain the population least-square loss function of GLM:
\begin{align*}
    \min_{\theta \in \mathbb{R}^{d}} \mathcal{L}(\theta) : = \frac{1}{2} \mathbb{E}_{X, Y}[(Y - (X^{\top} \theta)^{p})^{2}],
\end{align*}
where the outer expectation is taken with respect to $X \sim \mathcal{N}(0, I_{d})$ and $Y = g(X^{\top} \theta^{*}) + \varepsilon$ where $\varepsilon \sim \mathcal{N}(0, \sigma^2)$.
It is clear that $\theta^{*}$ is the global minimum of the population loss function $\mathcal{L}$. Furthermore, the function $\mathcal{L}$ is homogeneous, i.e., all directions have similar behaviors.

In this section, we consider two regimes of the GLM for our study of sample NormGD iterates: Strong signal-to-noise regime and Low signal-to-noise regime.

\vspace{0.5 em}
\noindent
\textbf{Strong signal-to-noise regime:} The strong signal-to-noise regime corresponds to the setting when $\theta^{*}$ is bounded away from 0 and $\|\theta^{*}\|$ is sufficiently large, i.e., $\|\theta^{*}\| \geq C$ for some universal constant $C$. Under this setting, we can check that the population loss function $\mathcal{L}$ is locally strongly convex and smooth, i.e., it satisfies Assumption~\ref{assump:homogeneous_property_fast} under the homogeneous setting with all fast directions. Furthermore, for Assumption~\ref{assump:concentration_bound_fast}, for any radius $r > 0$ there exist universal constants $C_{1}, C_{2}, C_{3}$ such that as long as $n \geq C_{1} (d \log(d/\delta))^{2p}$, the following uniform concentration bounds hold:
\begin{align}
    \sup_{\theta\in \mathbb{B}(\theta^*, r)} \|\nabla \mathcal{L}_{n}(\theta) - \nabla \mathcal{L}(\theta)\| \leq C_{2} \sqrt{\frac{d + \log(1/ \delta)}{n}}, \label{eq:generalized_linear_concentration_first_order} \\
    \sup_{\theta\in \mathbb{B}(\theta^*, r)} \|\nabla^2 \mathcal{L}_{n}(\theta) - \nabla^2 \mathcal{L}(\theta)\|_{\text{op}} \leq C_{3} \sqrt{\frac{d + \log(1/ \delta)}{n}}. \label{eq:generalized_linear_concentration_second_order}
\end{align}
The proof can be found in Appendix~\ref{sec:proof:low_signal_generalized_linear_uniform_concentration}.

\vspace{0.5 em}
\noindent
\textbf{Low signal-to-noise regime:} The low signal-to-noise regime corresponds to the setting when the value of $\|\theta^{*}\|$ is sufficiently small. To simplify the computation, we assume that $\theta^{*} = 0$. Direct calculation shows that $\nabla \mathcal{L}_{n}(\theta^{*}) = 0$. Furthermore, the population loss function becomes
\begin{align}
    \min_{\theta \in \mathbb{R}^{d}} \mathcal{L}(\theta) = \frac{\sigma^2 + (2p - 1)!! \|\theta - \theta^{*}\|^{2p}}{2}. \label{eq:population_no_signal}
\end{align}
Under this setting, the population loss function $\mathcal{L}$ is no longer locally strong convex around $\theta^{*} = 0$. Indeed, this function is homogeneous with all slow directions, which are given by:
\begin{align}
    \lambda_{\max}(\nabla^2 \mathcal{L}(\theta)) & \leq c_{1} \|\theta - \theta^{*}\|^{2p - 2}, \label{eq:low_signal_generalized_linear_maximum_eigenvalue} \\
    \lambda_{\min}(\nabla^2 \mathcal{L}(\theta)) & \geq c_{2} \|\theta - \theta^{*}\|^{2p - 2}, \label{eq:low_signal_generalized_linear_minimum_eigenvalue}
\end{align}
for all $\theta \in \mathbb{B}(\theta^{*}, r)$ for some $r > 0$. Here, $c_{1}, c_{2}$ are some universal constants depending on $r$. Therefore, the homogeneous Assumption~\ref{assump:homogeneous_property} is satisfied with $\alpha = 2p - 2$. The proof for the claims~\eqref{eq:low_signal_generalized_linear_maximum_eigenvalue} and~\eqref{eq:low_signal_generalized_linear_minimum_eigenvalue} is in Appendix~\ref{sec:proof:low_signal_generalized_linear_homogeneous}.

Moving to Assumption~\ref{assump:concentration_bound}, we demonstrate in Appendix~\ref{sec:proof:low_signal_generalized_linear_uniform_concentration} that we can find universal constants $C_{1}$ and $C_{2}$ such that for any $r > 0$ and $n \geq C_{1} (d \log(d/ \delta))^{2p}$ we have
\begin{align}
    \sup_{\theta \in \mathbb{B}(\theta^{*}, r)} \|
    \nabla^2 \mathcal{L}_{n}(\theta) - \nabla^2 \mathcal{L}(\theta) \|_{\text{op}} \leq C_{2} (r^{p - 2} + r^{2p - 2}) \sqrt{\frac{d + \log(1/ \delta)}{n}}  \label{eq:concentration_Hessian_generalized_model}
\end{align}
with probability at least $1 - \delta$. Hence, the stability of second order information Assumption~\ref{assump:concentration_bound} is satisfied with $\gamma = p - 2$.

Based on the above results, Theorems~\ref{theorem:statistical_rate_homogeneous_settings} for homogeneous settings with all slow directions and~\ref{theorem:statistical_rate_homogeneous_settings_fast} for homogeneous settings with all fast directions lead to the following statistical and computational complexities of NormGD algorithm for solving the true parameter of GLM.

\begin{corollary}
\label{corollary:generalized_model} 
Given the generalized linear model~\eqref{eq:generalized_linear} with $g(r) = r^{p}$ for some $p \in \mathbb{N}$ and $p \geq 2$, there exists universal constants $c, \tilde{c}_{1}, \tilde{c}_{2}, \bar{c}_{1}, \bar{c}_{2}$ such that when the sample size $n \geq c (d \log(d/ \delta))^{2p}$ and the initialization $\theta_{n}^{0} \in \mathbb{B}(\theta^{*}, r)$ for some chosen radius $r > 0$, with probability $1 - \delta$ the sequence of sample NormGD iterates $\{\theta_{n}^{t}\}_{t \geq 0}$ satisfies the following bounds:

(i) When $\|\theta^{*}\| \geq C$ for some universal constant $C$, we find that
\begin{align*}
        \| \theta_{n}^{t} - \theta^{*} \|  \leq \tilde{c}_{1} \sqrt{\frac{d + \log(1/\delta)}{n}}, \quad \quad \text{for} \ t \geq \tilde{c}_{2} \log \parenth{\frac{n}{d + \log(1/\delta)}},
\end{align*}
(ii) When $\theta^{*} = 0$, we obtain
\begin{align*}
        \min_{1 \leq k \leq t} \| \theta_{n}^{k} - \theta^{*} \| \leq c_{1}^\prime \parenth{\frac{d + \log(1/\delta)}{n}}^{1/(2p)}, \quad \quad \text{for} \ t \geq c_{2}^\prime \log \parenth{\frac{n}{d + \log(1/\delta)}}.
\end{align*}
\end{corollary}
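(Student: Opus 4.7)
The proof is essentially a two-case application of the general theory, so the plan is to verify the hypotheses of the relevant master theorems in each regime and then read off the stated bounds.

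For part (i), the strong signal-to-noise regime, the plan is to invoke the fast-direction version of the main theorem (Theorem~\ref{theorem:statistical_rate_homogeneous_settings_fast} in the appendix, corresponding to $\alpha = 0$, i.e.~local strong convexity). Two facts must be checked. First, I would verify that $\mathcal{L}$ is locally strongly convex and smooth around $\theta^{*}$ when $\|\theta^{*}\| \geq C$; this is essentially computing $\nabla^{2}\mathcal{L}(\theta) = \mathbb{E}[p^{2}(X^{\top}\theta)^{2p-2}XX^{\top}] - \mathbb{E}[p(p-1)(Y - (X^{\top}\theta)^{p})(X^{\top}\theta)^{p-2}XX^{\top}]$ and showing that on $\mathbb{B}(\theta^{*},r)$ for small enough $r$, the dominant term $p^{2}\mathbb{E}[(X^{\top}\theta)^{2p-2}XX^{\top}]$ is bounded above and below by multiples of the identity depending on $\|\theta^{*}\|$. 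Second, I would cite the uniform concentration bounds~\eqref{eq:generalized_linear_concentration_first_order}-\eqref{eq:generalized_linear_concentration_second_order} (proved in the appendix via standard sub-exponential chaining arguments on polynomial functions of Gaussian covariates) to verify Assumption~\ref{assump:concentration_bound_fast} with noise function $\varepsilon(n,\delta) = \sqrt{(d+\log(1/\delta))/n}$. The master theorem then immediately gives the claimed $\sqrt{(d+\log(1/\delta))/n}$ radius after $\log(n/(d+\log(1/\delta)))$ iterations.

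For part (ii), the low signal-to-noise regime with $\theta^{*} = 0$, the plan is to apply Theorem~\ref{theorem:statistical_rate_homogeneous_settings}. The homogeneous Assumption~\ref{assump:homogeneous_property} is already verified in~\eqref{eq:low_signal_generalized_linear_maximum_eigenvalue}-\eqref{eq:low_signal_generalized_linear_minimum_eigenvalue} with $\alpha = 2p-2$, and Assumption~\ref{assump:concentration_bound} is verified in~\eqref{eq:concentration_Hessian_generalized_model} with $\gamma = p-2$ and the same noise function $\varepsilon(n,\delta) = \sqrt{(d+\log(1/\delta))/n}$. The signal-vs-noise gap $\alpha \geq \gamma + 1$ reduces to $2p-2 \geq p-1$, i.e.~$p \geq 1$, which holds for our $p \geq 2$. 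The remaining condition for Theorem~\ref{theorem:statistical_rate_homogeneous_settings} is that $\nabla \mathcal{L}_{n}(\theta^{*}) = 0$; when $\theta^{*} = 0$ we have $\nabla \mathcal{L}_{n}(0) = -\frac{p}{n}\sum_{i} Y_{i}(X_{i}^{\top}0)^{p-1}X_{i}$, which vanishes identically for $p \geq 2$. Plugging into the theorem yields a final radius of $\varepsilon(n,\delta)^{1/(\alpha-\gamma)} = \varepsilon(n,\delta)^{1/p} = ((d+\log(1/\delta))/n)^{1/(2p)}$ after $\log(1/\varepsilon(n,\delta)) \asymp \log(n/(d+\log(1/\delta)))$ iterations, matching the corollary.

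The main obstacle is not in the assembly but in the verification of the concentration bounds~\eqref{eq:generalized_linear_concentration_first_order}-\eqref{eq:concentration_Hessian_generalized_model}, which the paper defers to the appendix. These require controlling suprema of empirical processes whose summands are polynomials of degree up to $2p-2$ in the Gaussian covariates $X_{i}$ and are thus only sub-exponential (more precisely, $\mathrm{Orlicz}$-$\psi_{2/(2p-2)}$); the sample-size condition $n \gtrsim (d\log(d/\delta))^{2p}$ arises precisely to convert these heavy-tailed concentration bounds into $\sqrt{d/n}$-scale deviations via discretization of $\mathbb{B}(\theta^{*},r)$ and a union bound. The additional $(r^{p-2}+r^{2p-2})$ factor in~\eqref{eq:concentration_Hessian_generalized_model} comes from the fact that at $\theta^{*}=0$, the Hessian difference contains terms of the form $(X^{\top}\theta)^{p-2}$ and $(X^{\top}\theta)^{2p-2}$ multiplying rank-one matrices, so that the magnitude of the Hessian noise vanishes as $\theta \to 0$; extracting the correct exponent $\gamma = p-2$ is what makes the scaling of the final statistical radius come out to $1/(2p)$ rather than $1/2$.
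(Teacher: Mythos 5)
Your proposal is correct and follows exactly the route the paper takes: the corollary is obtained by verifying Assumptions~\ref{assump:homogeneous_property_fast}--\ref{assump:concentration_bound_fast} in the strong-signal regime and Assumptions~\ref{assump:homogeneous_property} (with $\alpha=2p-2$) and~\ref{assump:concentration_bound} (with $\gamma=p-2$, so $\alpha-\gamma=p$) together with $\nabla\mathcal{L}_n(0)=0$ in the low-signal regime, and then invoking Theorems~\ref{theorem:statistical_rate_homogeneous_settings_fast} and~\ref{theorem:statistical_rate_homogeneous_settings} respectively with $\varepsilon(n,\delta)=\sqrt{(d+\log(1/\delta))/n}$. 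Your accounting of where the sample-size condition and the exponent $1/(2p)$ come from matches the paper's Appendix~\ref{sec:proof:generalized_linear_models}.
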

A few comments with Corollary~\ref{corollary:generalized_model} are in order. For the strong signal-to-noise regime, the sample NormGD only takes logarithmic number of iterations $\log(n)$ to reach the optimal statistical radius $(d/n)^{1/2}$ around the true parameter. This guarantee is similar to that of the fixed-step size GD iterates for solving the locally strongly convex and smooth loss function~\cite{Siva_2017, Ho_Instability}. For the low signal-to-noise regime, the sample NormGD iterates reach the final statistical radius $(d/n)^{1/2p}$ after logarithmic number of iterations in terms of $n$. In terms of the number of iterations, it is cheaper than that of that fixed-step size GD algorithm, which takes at least $\mathcal{O}(n^{\frac{p - 1}{p}})$ number of iterations for fixed dimension $d$ (See our discussion after Theorem~\ref{theorem:statistical_rate_homogeneous_settings}). For fixed $d$, it indicates that the total computational complexity of NormGD algorithm, which is at the order of $\mathcal{O}(n)$, is smaller than that of fixed-step size GD, which is $\mathcal{O}(n^{1 + \frac{p - 1}{p}})$. Therefore, for the low signal-to-noise regime, the NormGD algorithm is more computationally efficient than the fixed-step size GD algorithm for reaching the similar final statistical radius.

\vspace{0.5 em}
\noindent
\textbf{Experiments:}
\begin{figure*}[t!]
  \centering \includegraphics[width=0.32\textwidth]{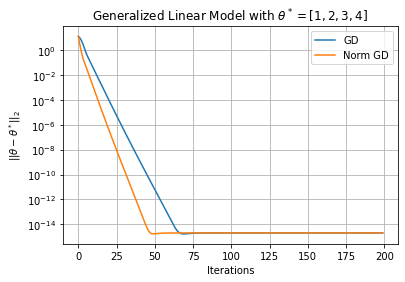}
  \includegraphics[width=0.32\textwidth]{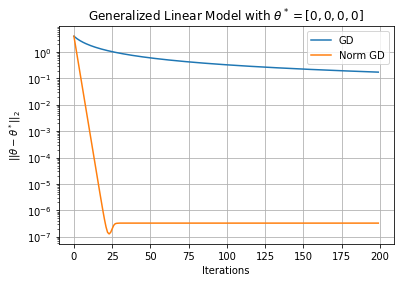}
  \includegraphics[width=0.32\textwidth]{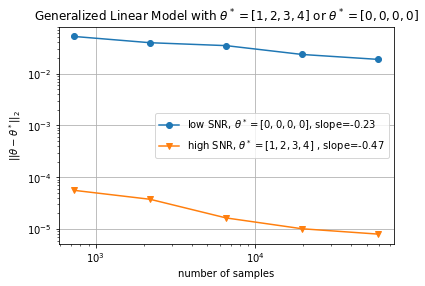}
  \caption{\textit{Verification simulation for the Generalized Linear Model (GLM) example}. \textbf{Left}: Both GD and Norm GD converges linearly in the high signal-to-noise setting; \textbf{Middle}: only Norm GD converges linearly in the low signal-to-noise setting while GD converges sub-linearly; \textbf{Right}: the log-log plot of sample size versus statistical error shows that the statistical error scales with $n^{-0.5}$ in the strong signal-to-noise setting and $n^{-0.25}$ in the low signal-to-noise setting, which coincides with our theory. The slope is computed as the linear regression coefficient of the log sample size versus the log statistical error.}
  \label{fig:GLM-exp}
\end{figure*}
To verify our theory, we performed simulation on generalized linear model, and the results are shown in Figure~\ref{fig:GLM-exp}. We set $p=2$ and $d=4$. For the low signal-to-noise setting, we set $\theta^*$ to be $[0,0,0,0]$, and for high signal-to-noise setting, we set $\theta^*$ to be $[1,2,3,4]$. For the left and the middle plots in Figure~\ref{fig:GLM-exp}, the sample size is set to be $1000$. As in the left plot, when in the strong signal-to-noise setting, both the fixed step size Gradient Descent method (referred to as GD) and our proposed Normalized Gradient Descent method (referred to as NormGD) converges linearly. However, once we shift to the low signal-to-noise setting, only Norm GD converges linearly, while GD converges only sub-linearly, as shown in the middle plot of Figure~\ref{fig:GLM-exp}. To further verify our corollaries, especially how the statistical error scales with $n$, we plot the statistical error versus sample size in the right plot as in Figure~\ref{fig:GLM-exp}. The experiments was repeated for $10$ times and the average of the statistical error is shown. The slope is computed as the linear regression coefficient of the log sample size versus the log statistical error. As in this log-log plot, in the strong signal-to-noise setting, the statistical error roughly scales with $n^{-0.5}$, while in the low signal-to-noise setting, the statistical error roughly scales with $n^{-0.25}$. This coincides with our theory as in Corollary~\ref{corollary:generalized_model}.
\subsection{Gaussian mixture models (GMM)}
\label{sec:Gaussian_mixture}

We now consider Gaussian mixture models (GMM), one of the most popular statistical models for modeling heterogeneous data~\citep{Lindsay-1995, Mclachlan-1988}. Parameter estimation in these models plays an important role in capturing the heterogeneity of different subpopulations. The common approach to estimate the location and scale parameters in these model is via maximizing the log-likelihood function. The statistical guarantee of the maximum likelihood estimator (MLE) in Gaussian mixtures had been studied in~\cite{Chen1992, Ho-Nguyen-AOS-17}. However, since the log-likelihood function is highly non-concave, in general we do not have closed-form expressions for the MLE. Therefore, in practice we utilize optimization algorithms to approximate the MLE. However, a complete picture about the statistical and computational complexities of these optimization algorithms have remained poorly understood.

In order to shed light on the behavior of NormGD algorithm for solving GMM, we consider a simplified yet important setting of this model, symmetric two-component location GMM. This model had been used in the literature to study the statistical behaviors of Expectation-Maximization (EM) algorithm~\cite{Siva_2017, Raaz_Ho_Koulik_2020}. We assume that the data $X_{1}, X_{2}, \ldots, X_{n}$ are i.i.d. samples from $\frac{1}{2} \mathcal{N}(-\theta^{*}, \sigma^2 I_{d}) + \frac{1}{2} \mathcal{N}(\theta^{*}, \sigma^2 I_{d})$ where $\sigma > 0$ is given and $\theta^{*}$ is true but unknown parameter. Our goal is to obtain an estimation of $\theta^{*}$ via also using the symmetric two-component location Gaussian mixture:
\begin{align}
\frac{1}{2} \mathcal{N}(-\theta, \sigma^2 I_{d}) + \frac{1}{2} \mathcal{N}(\theta, \sigma^2 I_{d}). \label{eq:overspecify_mixture}
\end{align}
As we mentioned earlier, we obtain an estimation of $\theta^{*}$ via maximizing the sample log-likelihood function associated with model~\eqref{eq:overspecify_mixture}, which admits the following form:
\begin{align}
    \min_{\theta \in \mathbb{R}^{d}} \bar{\mathcal{L}}_{n}(\theta) : = -\frac{1}{n} \sum_{i = 1}^{n} \log \left(\frac{1}{2}\phi(X_{i}|\theta, \sigma^2 I_{d}) + \frac{1}{2}\phi(X_{i}|-\theta, \sigma^2 I_{d})\right). \label{eq:sample_loglikelihood} 
\end{align}
Here, $\phi(\cdot|\theta, \sigma^2I_{d})$ denotes the density function of multivariate Gaussian distribution with mean $\theta$ and covariance matrix $\sigma^2I_{d}$. 

Similar to GLM, we also consider two regimes of the true parameter: Strong signal-to-noise regime when $\|\theta^{*}\|/\sigma$ is sufficiently large and Low signal-to-noise regime when $\|\theta^{*}\|/ \sigma$ is sufficiently small. To analyze the behaviors of sample NormGD iterates, we define the population version of the maximum likelihood estimation~\eqref{eq:sample_loglikelihood} as follows:
\begin{align}
    \min_{\theta \in \mathbb{R}^{d}} \bar{\mathcal{L}}(\theta) : = - \mathbb{E}\left[\log \left(\frac{1}{2} \phi(X|\theta, \sigma^2 I_{d}) + \frac{1}{2} \phi(X|-\theta, \sigma^2 I_{d})\right)\right]. \label{eq:population_loglikelihood} 
\end{align}
Here, the outer expectation is taken with respect to $X \sim \frac{1}{2} \mathcal{N}(-\theta^{*}, \sigma^2 I_{d}) + \frac{1}{2} \mathcal{N}(\theta^{*}, \sigma^2I_{d})$. We can check that $\bar{\mathcal{L}}$ is also homogeneous in all directions. The strong signal-to-noise regime corresponds to the setting when $\bar{\mathcal{L}}$ is homogeneous with all fast directions while the low signal-to-noise regime is associated with the setting when $\bar{\mathcal{L}}$ is homogeneous with all slow directions.

\vspace{0.5 em}
\noindent
\textbf{Strong signal-to-noise regime:} For the strong signal-to-noise regime, we assume that $\|\theta^{*}\| \geq C \sigma$ for some universal constant $C$. Since the function $\bar{\mathcal{L}}$ is locally strongly convex and smooth as long as $\theta \in \mathbb{B}(\theta^{*}, \frac{\|\theta^{*}\|}{4}) $ (see Corollary 1 in~\cite{Siva_2017}), the Assumption~\ref{assump:homogeneous_property_fast} under the homogeneous setting with all fast directions is satisfied. Furthermore, as long as we choose the radius $r \leq \frac{\|\theta^{*}\|}{4}$ and the sample size $n \geq C_{1} d \log(1/ \delta)$ for some universal constant $C_{1}$, with probability at least $1 - \delta$ there exist universal constants $C_{2}$ and $C_{3}$ such that
\begin{align}
    \sup_{\theta \in \mathbb{B}(\theta^{*}, r)} \| \nabla \bar{\mathcal{L}}_{n}(\theta) - \nabla \bar{\mathcal{L}}(\theta) \| \leq C_{2} \sqrt{\frac{d \log(1/ \delta)}{n}}, \nonumber \\
    \sup_{\theta \in \mathbb{B}(\theta^{*}, r)} \| \nabla^2 \bar{\mathcal{L}}_{n}(\theta) - \nabla^2 \bar{\mathcal{L}}(\theta) \|_{\text{op}}  \leq C_{3} \sqrt{\frac{d \log(1/ \delta)}{n}}.
    \label{eq:concentration_Hessian_mixture_model}
\end{align}
The proof of claims~\eqref{eq:concentration_Hessian_mixture_model} is in Appendix~\ref{sec:proof:uniform_concentration_mixture_model_homogeneous}. In light of Theorem~\ref{theorem:statistical_rate_homogeneous_settings_fast} in Appendix~\ref{sec:homogenenous_fast} for homogeneous settings with all fast directions, the NormGD iterates converge to the final statistical radius $(d/ n)^{1/2}$ after $\log(n)$ iterations (see Corollary~\ref{corollary:mixture_model} for a formal statement of this result). 

\begin{figure*}[t!]
  \centering \includegraphics[width=0.32\textwidth]{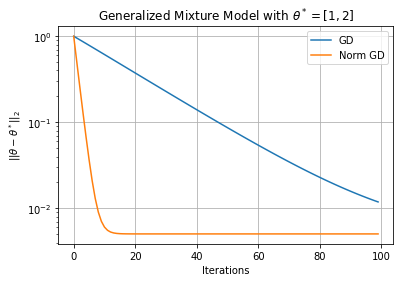}
  \includegraphics[width=0.32\textwidth]{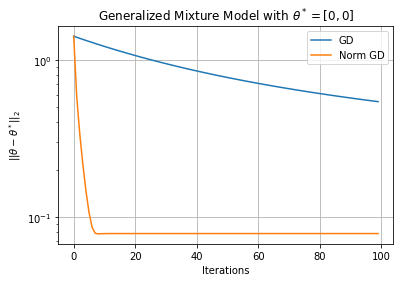}
  \includegraphics[width=0.32\textwidth]{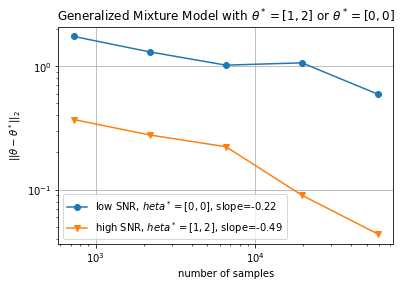}
  \caption{\textit{Verification simulation for the Gaussian Mixture Model (GMM) example}. \textbf{Left}: Both GD and Norm GD converges linearly in the strong signal-to-noise setting; \textbf{Middle}: only Norm GD converges linearly in the low signal-to-noise setting while GD converges sub-linearly; \textbf{Right}: the log-log plot of sample size versus statistical error shows that the statistical error scales with $n^{-0.5}$ in the strong signal-to-noise setting and $n^{-0.25}$ in the low signal-to-noise setting, which coincides with our theory. The slope is computed as the linear regression coefficient of the log sample size versus the log statistical error.}
  \label{fig:GMM-exp}
\end{figure*}

\vspace{0.5 em}
\noindent
\textbf{Low signal-to-noise regime:} Now we move to the low signal-to-noise regime, which refers to the setting when $\|\theta^{*}\|/ \sigma$ is sufficiently small. For the simplicity of computation we specifically assume that $\theta^{*} = 0$. Under this setting, the true model becomes a single Gaussian distribution with mean 0 and covariance matrix $\sigma^2 I_{d}$ while the fitted model~\eqref{eq:overspecify_mixture} has two components with similar weights and symmetric means. This setting is widely referred to as over-specified mixture model, namely, we fit the true mixture model with more components than needed, in statistics and machine learning~\citep{Chen1992, Rousseau-2011}. It is important in practice as the true number of components is rarely known and to avoid underfitting the true model, we tend to use a fitted model with more components than the true number of components. 

In Appendix~\ref{sec:proof:low_signal_mixture_model_homogeneous}, we prove that the population loss function $\bar{\mathcal{L}}$ is homogeneous with all slow directions and satisfy the following properties:
\begin{align}
    \lambda_{\max}(\nabla^2 \bar{\mathcal{L}}(\theta)) & \leq c_{1} \|\theta - \theta^{*}\|^{2}, \label{eq:low_signal_generalized_linear_maximum_eigenvalue} \\
    \lambda_{\min}(\nabla^2 \bar{\mathcal{L}}(\theta)) & \geq c_{2} \|\theta - \theta^{*}\|^{2}, \label{eq:low_signal_generalized_linear_minimum_eigenvalue}
\end{align}
for all $\theta \in \mathbb{B}(\theta^{*}, \frac{\sigma}{2})$ where $c_{1}$ and $c_{2}$ are some universal constants. Therefore, the population loss function $\bar{\mathcal{L}}$ satisfies Assumption~\ref{assump:homogeneous_property} with $\alpha = 2$. 

For the stability of second-order information, we prove in Appendix~\ref{sec:proof:uniform_concentration_mixture_model_homogeneous} that there exist universal constants $C_{1}$ and $C_{2}$ such that for any $r > 0$, with probability $1 - \delta$ as long as $n \geq C_{1} d \log(1/ \delta)$ we obtain
\begin{align}
    \sup_{\theta \in \mathbb{B}(\theta^{*}, r)} \| \nabla^2 \bar{\mathcal{L}}_{n}(\theta) - \nabla^2 \bar{\mathcal{L}}(\theta) \|  \leq C_{2} \sqrt{\frac{d \log(1/ \delta)}{n}}. \label{eq:concentration_Hessian_mixture_model_low_signal}
\end{align}
The uniform concentration bound~\eqref{eq:concentration_Hessian_mixture_model_low_signal} shows that for the low signal-to-noise regime of two-component location Gaussian mixtures, the stability of second-order information in Assumption~\ref{assump:concentration_bound} is satisfied with $\gamma = 0$. Moreover, from Lemma 1 in \citep{Raaz_Ho_Koulik_2018_second} we know $\nabla \bar{\mathcal{L}}_n(\theta^*) = 0$. Combining the results from the homogeneous behaviors of population loss function in equations~\eqref{eq:low_signal_generalized_linear_maximum_eigenvalue}-\eqref{eq:low_signal_generalized_linear_minimum_eigenvalue} and the uniform concentration bound in equation~\eqref{eq:concentration_Hessian_mixture_model_low_signal} to the result of Theorem~\ref{theorem:statistical_rate_homogeneous_settings}, we obtain that the NormGD updates reach the final statistical radius $(d/n)^{1/4}$ after $\log(n)$ number of iterations. 

Now, we would like to formally state the statistical behaviors of the NormGD iterates for both the strong signal-to-noise and low signal-to-noise regimes.
\begin{corollary}
\label{corollary:mixture_model} 
Given the symmetric two-component mixture model~\eqref{eq:overspecify_mixture}, we can find positive universal constants $c, \bar{c}_{1}, \bar{c}_{2}, c_{1}', c_{2}'$ such that with probability at least $1 - \delta$, when $n \geq c d \log(1/ \delta)$ the sequence of NormGD iterates $\{\theta_{n}^{t}\}_{t \geq 0}$ satisfies the following bounds:

(i) When $\|\theta^{*}\| \geq C$ for some sufficiently large constant $C$ and the initialization $\theta_{n}^{0} \in \mathbb{B}(\theta^{*}, \frac{\|\theta\|^{*}}{4})$, we obtain that
\begin{align*}
         \| \theta_{n}^{t} - \theta^{*} \| \leq \bar{c}_{1} \sqrt{\frac{d \log(1/\delta)}{n}}, \quad \quad \text{as long as} \ t \geq \bar{c}_{2} \log \parenth{\frac{n}{d \log(1/\delta)}},
\end{align*}

(ii) Under the setting $\theta^{*} = 0$ and the initialization $\theta_{n}^{0} \in \mathbb{B}(\theta^{*}, \frac{\sigma}{2})$,  we have
\begin{align*}
        \min_{1 \leq k \leq t} \| \theta_{n}^{k} - \theta^{*} \| \leq c_{1}' \parenth{\frac{d \log(1/\delta)}{n}}^{1/4}, \quad \quad \text{for} \ t \geq c_{2}' \log \parenth{\frac{n}{d \log(1/\delta)}}.
\end{align*}
\end{corollary}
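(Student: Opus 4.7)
The plan is to verify, in each of the two regimes, the hypotheses of the appropriate master theorem from Section~\ref{sec:general_theory_normalized_GD} (and its fast-direction counterpart in Appendix~\ref{sec:homogenenous_fast}), and then invoke that theorem directly; the corollary is essentially a concrete instantiation of the general theory with the noise function $\varepsilon(n,\delta) = \sqrt{d\log(1/\delta)/n}$.

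For part (i), the strong signal-to-noise regime $\|\theta^*\| \geq C$, I would first appeal to Corollary~1 of~\cite{Siva_2017} to conclude that $\bar{\mathcal{L}}$ is locally strongly convex and smooth on $\mathbb{B}(\theta^*, \|\theta^*\|/4)$, which is precisely the content of Assumption~\ref{assump:homogeneous_property_fast}. I then invoke the uniform gradient and Hessian concentration bounds in equation~\eqref{eq:concentration_Hessian_mixture_model}, whose proof is deferred to Appendix~\ref{sec:proof:uniform_concentration_mixture_model_homogeneous}, to verify the stability of second-order information in the fast-direction setting. Given the initialization $\theta_n^0 \in \mathbb{B}(\theta^*, \|\theta^*\|/4)$ and the sample-size condition $n \geq c\, d\log(1/\delta)$, Theorem~\ref{theorem:statistical_rate_homogeneous_settings_fast} then yields the contraction bound $\|\theta_n^t - \theta^*\| \leq \bar{c}_1 \sqrt{d\log(1/\delta)/n}$ once $t \geq \bar{c}_2 \log(n/(d\log(1/\delta)))$.

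For part (ii), the low signal-to-noise regime with $\theta^* = 0$, I would first verify Assumption~\ref{assump:homogeneous_property} with $\alpha = 2$ using the homogeneous Hessian bounds displayed just before the corollary, which are established in Appendix~\ref{sec:proof:low_signal_mixture_model_homogeneous}. Assumption~\ref{assump:concentration_bound} then holds with $\gamma = 0$ by the uniform Hessian concentration in equation~\eqref{eq:concentration_Hessian_mixture_model_low_signal}. The compatibility condition $\alpha \geq \gamma + 1$ reduces to $2 \geq 1$, and the vanishing empirical gradient $\nabla \bar{\mathcal{L}}_n(\theta^*) = 0$ required by Lemma~\ref{lemma:concentration_bound_population_sample} is supplied by Lemma~1 of~\cite{Raaz_Ho_Koulik_2018_second}. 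Plugging these into Theorem~\ref{theorem:statistical_rate_homogeneous_settings} and simplifying $\varepsilon(n,\delta)^{1/(\alpha-\gamma)} = (d\log(1/\delta)/n)^{1/4}$ gives the claimed rate on $\min_{k \leq t}\|\theta_n^k - \theta^*\|$ after $t \gtrsim \log(n/(d\log(1/\delta)))$ iterations.

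The main obstacles lie in the appendix lemmas rather than in the corollary itself. For part (i), the work is the uniform concentration of $\nabla \bar{\mathcal{L}}_n$ and $\nabla^2 \bar{\mathcal{L}}_n$ via a standard empirical-process argument tailored to the Gaussian mixture log-likelihood, since the local strong convexity is already supplied by~\cite{Siva_2017}. For part (ii), the genuinely delicate step is the degenerate Hessian lower bound $\lambda_{\min}(\nabla^2 \bar{\mathcal{L}}(\theta)) \gtrsim \|\theta\|^2$ near $\theta^* = 0$, which requires a Taylor expansion of the overfitted mixture log-likelihood past the vanishing lower-order terms, together with a uniform control on the remainder over $\mathbb{B}(0, \sigma/2)$. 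The remaining bookkeeping — matching constants, checking that the inner radius $r_n$ from Lemma~\ref{lemma:concentration_bound_population_sample} is dominated by the initialization radius, and converting $\log(1/\varepsilon(n,\delta))$ into $\log(n/(d\log(1/\delta)))$ — is identical to the analogous step carried out for Corollary~\ref{corollary:generalized_model}.
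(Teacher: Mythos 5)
Your proposal is correct and follows essentially the same route as the paper: the corollary is proved there exactly by verifying Assumptions~\ref{assump:homogeneous_property_fast}/\ref{assump:concentration_bound_fast} (via Corollary~1 of~\cite{Siva_2017} and the concentration bounds~\eqref{eq:concentration_Hessian_mixture_model}) for the strong-signal case and Assumptions~\ref{assump:homogeneous_property} (with $\alpha=2$), \ref{assump:concentration_bound} (with $\gamma=0$), and $\nabla\bar{\mathcal{L}}_n(\theta^*)=0$ for the low-signal case, then invoking Theorems~\ref{theorem:statistical_rate_homogeneous_settings_fast} and~\ref{theorem:statistical_rate_homogeneous_settings} with $\varepsilon(n,\delta)=\sqrt{d\log(1/\delta)/n}$. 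Your identification of where the real work lies (the appendix concentration lemmas and the degenerate Hessian lower bound via the $\mathrm{sech}^2$ expansion) matches the paper's supporting arguments.
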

We have the following comments with the results of Corollary~\ref{corollary:mixture_model}. In the strong signal-to-noise case, the NormGD algorithm and the fixed step size GD algorithm, which is also the EM algorithm for the symmetric two-component mixture, reach the final statistical radius $(d/n)^{1/2}$ around the true parameter $\theta^{*}$ after $\log(n)$ number of iterations. For the low signal-to-noise regime, the NormGD iterates reach the final statistical radius $(d/n)^{1/4}$ after a logarithmic number of iterations in terms of $n$ while the EM iterates reach that radius after $\sqrt{n}$ number of iterations~\cite{Raaz_Ho_Koulik_2020}. It demonstrates that for fixed dimension $d$ the total computational complexity for the NormGD is at the order of $\mathcal{O}(n)$, which is much cheaper than that of the EM algorithm, which is at the order of $\mathcal{O}(n^{3/2})$. 

\vspace{0.5 em}
\noindent
\textbf{Experiments:}
To verify our theory, we performed simulation on Gaussian Mixture Model (GMM), and the results are shown in Figure~\ref{fig:GMM-exp}. We set $d=2$. For the low signal-to-noise setting, we set $\theta^*$ to be $[0,0]$, and for strong signal-to-noise setting, we set $\theta^*$ to be $[1,2]$. For the left and the middle plots in Figure~\ref{fig:GLM-exp}, the sample size is set to be $10000$.  As in the left plot, when in the strong signal-to-noise setting, both the fixed step size Gradient Descent method (referred to as GD, and is essentially EM algorithm as described above) and our proposed Normalized Gradient Descent method (referred to as NormGD) converges linearly. However, once we shift from the strong signal-to-noise setting to the low signal-to-noise setting, only Norm GD converges linearly, while GD converges only sub-linearly, as shown in the middle plot. To further verify our corollaries, especially how the statistical error scales with $n$, we plot the statistical error versus sample size in the right plot. The experiments were repeated for $10$ times and the average of the statistical error is shown. The slope is computed as the linear regression coefficient of the log sample size versus the log statistical error. As in this log-log plot, in the strong signal-to-noise setting, the statistical error roughly scales with $n^{-0.5}$, while in the low signal-to-noise setting, the statistical error roughly scales with $n^{-0.25}$. This coincides with our theory as in Corollary~\ref{corollary:mixture_model}.

\section{Conclusion}
\label{sec:discussion}
In this paper, we show that by utilizing second-order information in the design of optimization algorithms, we are able to improve the computational complexity of these algorithms for solving parameter estimation in statistical models. In particular, we study the statistical and computational complexities of the NormGD algorithm, a variant of gradient descent algorithm whose step size is scaled by the maximum eigenvalue of the Hessian matrix of the loss function. We show that when the population loss function is homogeneous, the NormGD algorithm only needs a logarithmic number of iterations to reach the final statistical radius around the true parameter. In terms of iteration complexity and total computational complexity, it is cheaper than fixed step size GD algorithm, which requires a polynomial number of iterations to reach the similar statistical radius under the singular statistical model settings.
\section{Acknowledgements}
\label{sec:acknowledge}
This work was partially supported by the NSF IFML 2019844 award and research gifts by UT Austin ML grant to NH, and by NSF awards 1564000 and 1934932 to SS.
\appendix


\begin{center}
{\bf \Large Supplement to ``Improving Computational Complexity in Statistical Models
    with Second-Order Information''}
\end{center}

In the supplementary material, we collect proofs and results deferred from the main text. In Appendix~\ref{sec:homogenenous_fast}, we provide general theory for the statistical guarantee of NormGD for the homogeneous settings with all fast directions of the population loss function. In Appendix~\ref{sec:proof_main}, we provide proofs for the main results in the main text. We then provide proofs for the statistical and computational complexities of NormGD under generalized linear models and mixture models respectively in Appendices~\ref{sec:proof:generalized_linear_models} and~\ref{sec:proof:mixture_model}. 
\section{Homogeneous Settings with All Fast Directions}
\label{sec:homogenenous_fast}
In this Appendix, we provide statistical guarantee for the NormGD iterates when the population loss function is homogeneous with all fast directions. Following the population to sample analysis in equation~\eqref{eq:population_to_sample}, we first consider the strong convexity and Lipschitz smoothness assumptions that characterize all fast directions.
\begin{enumerate}[label=(S.1)] 
\item \label{assump:homogeneous_property_fast} 
(Strongly convexity and Lipschitz smoothness) For some radius $r > 0$, for all $\theta\in \mathbb{B}(\theta^*, r)$ we have
\begin{align*}
    \bar{c}_{1} \leq \lambda_{\min}(\nabla^2 f(\theta)) \leq \lambda_{\max}(\nabla^2 f(\theta)) \leq \bar{c}_{2},
\end{align*}
where $\bar{c}_{1} > 0$ and $\bar{c}_2>0$ are some universal constants depending on $r$.
\end{enumerate}
The Assumption~\ref{assump:homogeneous_property_fast} is a special case of Assumption~\ref{assump:homogeneous_property} when $\alpha = 0$. A simple example for the function $f$ that satisfies Assumption~\ref{assump:homogeneous_property_fast} is $f(\theta) = \|\theta\|^2$.

Given the Assumption~\ref{assump:homogeneous_property_fast}, we obtain the following result for the contraction of the population NormGD operator $F$ around the true parameter $\theta^{*}$.
\begin{lemma} \label{lemma:optimization_homogeneous_fast}
Assume Assumption~\ref{assump:homogeneous_property_fast} holds for some universal constants $\bar{c}_1, \bar{c}_2$. Then, if the step-size $\eta \leq \frac{\bar{c}_2^2}{2\bar{c}_1^2}$, then we have that
\begin{align*}
    \|F(\theta)) - \theta^*\| \leq \bar{\kappa} \| \theta - \theta^*\|,
\end{align*}
where $\bar{\kappa} < 1$ is a universal constant that only depends on $\eta, \bar{c}_1, \bar{c}_2$.
\end{lemma}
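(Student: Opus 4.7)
The plan is to follow the classical contraction analysis for preconditioned gradient descent on a strongly convex, smooth objective, adapting it to the fact that the step-size is rescaled by $1/\lambda_{\max}(\nabla^2 f(\theta))$ rather than held fixed. This specializes the argument underlying Lemma~\ref{lemma:optimization_homogeneous} to the case $\alpha = 0$.

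First, since $f$ is strongly convex on $\mathbb{B}(\theta^*, r)$ by Assumption~\ref{assump:homogeneous_property_fast}, the point $\theta^*$ is the unique minimizer there and $\nabla f(\theta^*) = 0$. I would then write the gradient in integral form along the segment from $\theta^*$ to $\theta$:
\begin{equation*}
  \nabla f(\theta) \;=\; H(\theta)\,(\theta-\theta^*), \qquad H(\theta) \;:=\; \int_0^1 \nabla^2 f\bigl(\theta^* + s(\theta-\theta^*)\bigr)\,ds.
\end{equation*}
Because the full segment lies in $\mathbb{B}(\theta^*, r)$, Assumption~\ref{assump:homogeneous_property_fast} gives $\bar{c}_1 I \preceq H(\theta) \preceq \bar{c}_2 I$, and the same two-sided eigenvalue bound holds for $\nabla^2 f(\theta)$ itself; in particular, $\lambda := \lambda_{\max}(\nabla^2 f(\theta)) \in [\bar{c}_1, \bar{c}_2]$.

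Next, I would substitute this representation into the definition of $F$ to obtain
\begin{equation*}
  F(\theta) - \theta^* \;=\; \left(I - \frac{\eta}{\lambda}\,H(\theta)\right)(\theta-\theta^*),
\end{equation*}
so that $\|F(\theta)-\theta^*\| \leq \|I - (\eta/\lambda)\, H(\theta)\|_{\text{op}}\,\|\theta-\theta^*\|$. Since $H(\theta)$ is symmetric positive definite, the operator norm equals $\max_i |1 - \eta\mu_i/\lambda|$ where $\mu_i$ ranges over the eigenvalues of $H(\theta)$. Combining the previous eigenvalue bounds for $H(\theta)$ and $\lambda$ yields $\eta\mu_i/\lambda \in [\eta\bar{c}_1/\bar{c}_2,\; \eta\bar{c}_2/\bar{c}_1]$, hence
\begin{equation*}
  \|F(\theta)-\theta^*\| \;\leq\; \bar{\kappa}\,\|\theta-\theta^*\|, \qquad \bar{\kappa} \;:=\; \max\!\left(\left|1 - \tfrac{\eta\bar{c}_1}{\bar{c}_2}\right|,\; \left|1 - \tfrac{\eta\bar{c}_2}{\bar{c}_1}\right|\right).
\end{equation*}

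The only remaining task is bookkeeping on the constant: verify that the stated step-size restriction forces $\bar{\kappa} < 1$. Concretely, one needs to rule out the "too large" case $\eta\bar{c}_2/\bar{c}_1 \geq 2$ (which would destroy the second term) while retaining the strict positivity $\eta\bar{c}_1/\bar{c}_2 > 0$ (automatic from $\eta>0$). This is routine arithmetic producing a $\bar{\kappa}$ depending only on $\eta, \bar{c}_1, \bar{c}_2$, and I do not expect any genuine obstacle beyond checking the numerics of the step-size threshold. The main conceptual point of the lemma is simply that normalizing by $\lambda_{\max}(\nabla^2 f(\theta)) \in [\bar{c}_1,\bar{c}_2]$ is equivalent, up to a bounded multiplicative constant, to using a fixed step-size in the standard strongly convex/smooth regime.
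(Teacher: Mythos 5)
Your proof is correct in substance but takes a genuinely different route from the paper's. The paper gives no separate argument for this lemma: it states that the proof of Lemma~\ref{lemma:optimization_homogeneous} applies verbatim with $\alpha=0$, i.e., it expands $\|F(\theta)-\theta^*\|^2$, lower-bounds $\langle\nabla f(\theta),\theta-\theta^*\rangle$ by the function gap $f(\theta)-f(\theta^*)$ via convexity, and then combines the gap lower bound of Lemma~\ref{lem:obj_param_conversion} with the gradient-norm upper bound $\|\nabla f(\theta)\|\le \bar{c}_2\|\theta-\theta^*\|$. You instead write $\nabla f(\theta)=H(\theta)(\theta-\theta^*)$ with $H(\theta)=\int_0^1\nabla^2 f(\theta^*+s(\theta-\theta^*))\,ds$ and bound the spectrum of $I-(\eta/\lambda)H(\theta)$ directly. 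For this special case your spectral argument is more elementary and actually sharper: it gives contraction on the wider range $\eta<2\bar{c}_1/\bar{c}_2$. What the paper's energy argument buys is uniformity in $\alpha$: for $\alpha>0$ the integrated Hessian along the segment and $\lambda_{\max}(\nabla^2 f(\theta))$ scale with different $\alpha$-dependent factors, so the clean eigenvalue sandwich you exploit is lost, whereas the convexity-plus-gap argument carries over. (Minor point: you justify $\nabla f(\theta^*)=0$ by strong convexity, which only gives existence of a unique minimizer, not that it equals $\theta^*$; this is really a standing modeling assumption, used silently by the paper in the proof of Lemma~\ref{lem:obj_param_conversion}.)

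One caution on the step you deferred as ``routine arithmetic'': with the step-size condition as literally printed, $\eta\le\bar{c}_2^2/(2\bar{c}_1^2)$, the inequality $\eta\bar{c}_2/\bar{c}_1<2$ that you need does \emph{not} follow when $\bar{c}_2/\bar{c}_1$ is large, and the paper's own $\alpha=0$ computation fails under that condition as well. The condition is evidently a typo for $\eta\le\bar{c}_1^2/(2\bar{c}_2^2)$, matching Lemma~\ref{lemma:optimization_homogeneous}; under that reading $\eta\bar{c}_2/\bar{c}_1\le\bar{c}_1/(2\bar{c}_2)\le 1/2$ and your $\bar{\kappa}=\max\left(\left|1-\eta\bar{c}_1/\bar{c}_2\right|,\left|1-\eta\bar{c}_2/\bar{c}_1\right|\right)<1$ is immediate.
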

The proof of Lemma~\ref{lemma:optimization_homogeneous_fast} is a direct from the proof of Lemma~\ref{lemma:optimization_homogeneous} with $\alpha = 0$; therefore, its proof is omitted.
\begin{enumerate}[label=(S.2)] 
\item\label{assump:concentration_bound_fast} (Stability of first and second-order information) For some fixed positive parameter $r > 0$, there exist a noise function $\varepsilon: \mathbb{N} \times (0,1] \to \mathbb{R}^{+}$, and universal constants $\bar{c}_3, \bar{c}_4 > 0$ depends on $r$, such that 
\begin{align*}
    \sup_{\theta\in \mathbb{B}(\theta^*, r)} \|\nabla f_{n}(\theta) - \nabla f(\theta)\| & \leq \bar{c}_3 \cdot \varepsilon(n, \delta), \\
    \sup_{\theta\in \mathbb{B}(\theta^*, r)} \|\nabla^{2} f_{n}(\theta) - \nabla^2 f(\theta)\|_{\text{op}} & \leq \bar{c}_4 \cdot \varepsilon(n, \delta).
\end{align*}
for all $r \in (0, r)$ with probability $1 - \delta$.
\end{enumerate}
We would like to remark that the assumption in the uniform concentration of $\nabla f_{n}(\theta)$ around $\nabla f(\theta)$ is standard for analyzing optimization algorithms for solving parameter estimation under locally strongly convex and smooth population loss function~\cite{Siva_2017, Ho_Instability}. The extra assumption on the uniform concentration of the empirical Hessian matrix $\nabla^2 f_{n}(\theta)$ around the population Hessian matrix $\nabla^2 f(\theta)$ is to ensure that $\lambda_{\max}(\nabla^2 f_{n}(\theta)$ in NormGD algorithm will stay close to $\lambda_{\max}(\nabla^2 f(\theta))$. These two conditions are sufficient to guarantee the stability of the sample NormGD operator $F_{n}$ around the population NormGD operator in the following lemma.
\begin{lemma} \label{lemma:concentration_bound_population_sample_fast}
Assume that Assumption~\ref{assump:concentration_bound_fast} holds, and $n$ is sufficiently large such that $\bar{c}_1 > 2\bar{c_3} \varepsilon(n, \delta)$. Then, we obtain that
\begin{align*}
    \sup_{\theta \in \mathbb{B}(\theta^{*}, r)} \|F_n(\theta)-F(\theta)\| \leq \bar{c}_5 \varepsilon(n, \delta),
\end{align*}
and $\bar{c}_5$ is a universal constant depends on $\eta, \bar{c}_1, \bar{c}_2, \bar{c}_3, \bar{c}_4$.
\end{lemma}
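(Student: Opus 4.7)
The plan is to decompose $F_n(\theta)-F(\theta)$ algebraically and then bound each piece using the two parts of Assumption~\ref{assump:concentration_bound_fast} together with Assumption~\ref{assump:homogeneous_property_fast}. Abbreviating $a := \lambda_{\max}(\nabla^2 f(\theta))$ and $a_n := \lambda_{\max}(\nabla^2 f_n(\theta))$, I would write
\begin{align*}
F_n(\theta)-F(\theta) \;=\; \eta\Bigl(\tfrac{\nabla f(\theta)}{a}-\tfrac{\nabla f_n(\theta)}{a_n}\Bigr) \;=\; \frac{\eta\bigl(\nabla f(\theta)-\nabla f_n(\theta)\bigr)}{a} \;+\; \frac{\eta\,\nabla f_n(\theta)\,(a_n-a)}{a\,a_n},
\end{align*}
so that a triangle inequality produces two terms that I can handle independently.

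First, I would control the denominators. The lower bound $a\geq \bar c_1$ is immediate from Assumption~\ref{assump:homogeneous_property_fast}. For $a_n$, the key step is Weyl's inequality for symmetric matrices, which gives $|a_n-a|\leq \|\nabla^2 f_n(\theta)-\nabla^2 f(\theta)\|_{\text{op}}\leq \bar c_4\,\varepsilon(n,\delta)$ via Assumption~\ref{assump:concentration_bound_fast}. The sample-size condition $\bar c_1>2\bar c_3\varepsilon(n,\delta)$ (with a mild adjustment absorbing $\bar c_4$ into $\bar c_3$ if needed) then ensures $a_n\geq \bar c_1/2$ uniformly on $\mathbb{B}(\theta^*,r)$.

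Second, I would bound $\|\nabla f_n(\theta)\|$ uniformly on the ball. Because $\theta^{*}$ is the minimizer of the strongly convex population loss, $\nabla f(\theta^{*})=0$, and smoothness from Assumption~\ref{assump:homogeneous_property_fast} yields $\|\nabla f(\theta)\|\leq \bar c_2\|\theta-\theta^{*}\|\leq \bar c_2 r$ for $\theta\in\mathbb{B}(\theta^{*},r)$. Combining with the gradient stability in Assumption~\ref{assump:concentration_bound_fast}, I obtain $\|\nabla f_n(\theta)\|\leq \bar c_2 r+\bar c_3\varepsilon(n,\delta)$.

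Finally, I would assemble the bounds. The first term is at most $\eta\bar c_3\varepsilon(n,\delta)/\bar c_1$, while the second is at most $\eta(\bar c_2 r+\bar c_3\varepsilon(n,\delta))\,\bar c_4\varepsilon(n,\delta)/(\bar c_1\cdot\bar c_1/2)$. Summing the two yields a bound of the form $\bar c_5\,\varepsilon(n,\delta)$, where $\bar c_5$ depends only on $\eta,\bar c_1,\bar c_2,\bar c_3,\bar c_4$ (and the radius $r$, which is fixed). There is no substantial obstacle; the only real point of care is applying Weyl's inequality to convert the operator-norm Hessian bound from Assumption~\ref{assump:concentration_bound_fast} into a scalar bound on $|a_n-a|$, and ensuring the sample-size condition keeps $a_n$ bounded away from zero so that the ratio in the second term remains controlled.
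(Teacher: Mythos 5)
Your proof is correct and follows essentially the same route as the paper: the identical add-and-subtract decomposition of $F_n(\theta)-F(\theta)$ into a gradient-difference term and an eigenvalue-difference term, with Weyl's inequality converting the operator-norm Hessian concentration into a bound on $|\lambda_{\max}(\nabla^2 f_n)-\lambda_{\max}(\nabla^2 f)|$ and the sample-size condition keeping the sample eigenvalue bounded below. The only differences are cosmetic (which of $\nabla f$ or $\nabla f_n$ sits in the cross term), and your remark about absorbing $\bar c_4$ into the condition $\bar c_1>2\bar c_3\varepsilon(n,\delta)$ correctly identifies a notational slip in the paper's statement rather than a gap in your argument.
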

\begin{proof}
With straightforward calculation, we have that
\begin{align*}
    \|F_n(\theta) - F(\theta)\|
    \leq & \eta \left(\left\|\frac{\nabla f(\theta)(\lambda_{\max}(\nabla^2 f(\theta)) - \lambda_{\max}(\nabla^2 f_n(\theta)))}{\lambda_{\max}(\nabla^2 f_n(\theta)) \lambda_{\max}(\nabla^2 f(\theta))}\right\| + \left\|\frac{\nabla f_n(\theta) - \nabla f(\theta)}{\lambda_{\max}(\nabla^2 f_n(\theta))}\right\| \right)\\
    \leq & \eta \left(\frac{\bar{c}_2\bar{c}_3\varepsilon(n, \delta)}{(\bar{c}_1 - \bar{c}_3 \varepsilon(n, \delta)) \bar{c}_1 } + \frac{\bar{c}_4 \varepsilon(n, \delta)}{\bar{c}_1  - \bar{c}_3 \varepsilon(n, \delta)}\right)\\
    \leq & \eta \left(\frac{2\bar{c}_2 \bar{c}_3 + 2\bar{c}_1\bar{c_4}}{\bar{c_1}^2}\right) \varepsilon(n, \delta).
\end{align*}
Take $\bar{c}_5$ accordingly, we conclude the proof.
\end{proof}
\begin{theorem} \label{theorem:statistical_rate_homogeneous_settings_fast}
Assume Assumptions~\ref{assump:homogeneous_property_fast} and~\ref{assump:concentration_bound_fast} hold, and $n$ is sufficient large such that $\bar{c}_1 > 2\bar{c}_3 \varepsilon(n, \delta)$ and $\bar{c}_5 \varepsilon(n, \delta) \leq (1 - \bar{\kappa}) r$ where $\bar{\kappa}$ is the constant defined in Lemma~\ref{lemma:optimization_homogeneous_fast}.
Then, there exist universal constants $\bar{C}_{1}$, $\bar{C}_{2}$ such that for $t \geq \bar{C}_{1}\log (1/\varepsilon(n, \delta))$, the following holds:
\begin{align*}
    \|\theta_n^t - \theta^*\| \leq \bar{C}_{2} \cdot \varepsilon(n, \delta),
\end{align*}
\end{theorem}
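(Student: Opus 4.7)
The plan is to run the standard population-to-sample decomposition used in equation~\eqref{eq:population_to_sample}, but now fed by the two fast-direction lemmas (Lemma~\ref{lemma:optimization_homogeneous_fast} and Lemma~\ref{lemma:concentration_bound_population_sample_fast}) instead of the slow-direction ones. Concretely, I would write
\[
\|\theta_n^{t+1}-\theta^*\| \;\leq\; \|F(\theta_n^t)-\theta^*\| + \|F_n(\theta_n^t)-F(\theta_n^t)\|,
\]
and, conditional on $\theta_n^t \in \mathbb{B}(\theta^*,r)$, bound the first summand by $\bar\kappa\|\theta_n^t-\theta^*\|$ via Lemma~\ref{lemma:optimization_homogeneous_fast} and the second by $\bar c_5\,\varepsilon(n,\delta)$ via Lemma~\ref{lemma:concentration_bound_population_sample_fast}. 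The assumption $\bar c_1 > 2\bar c_3\,\varepsilon(n,\delta)$ is exactly what is needed to invoke Lemma~\ref{lemma:concentration_bound_population_sample_fast}. This yields the one-step recursion
\[
\|\theta_n^{t+1}-\theta^*\| \;\leq\; \bar\kappa\,\|\theta_n^t-\theta^*\| \;+\; \bar c_5\,\varepsilon(n,\delta).
\]

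Next I would unroll this linear recursion. A straightforward induction gives
\[
\|\theta_n^{t}-\theta^*\| \;\leq\; \bar\kappa^{\,t}\,\|\theta_n^{0}-\theta^*\| \;+\; \frac{\bar c_5}{1-\bar\kappa}\,\varepsilon(n,\delta).
\]
Choosing $\bar C_1$ so that $\bar\kappa^{\,\bar C_1 \log(1/\varepsilon(n,\delta))}\,\|\theta_n^{0}-\theta^*\| \leq \varepsilon(n,\delta)$ (which is possible since $\bar\kappa<1$ and $\|\theta_n^{0}-\theta^*\|\leq r$), and absorbing constants into $\bar C_2 := 1 + \bar c_5/(1-\bar\kappa)$, produces the stated bound $\|\theta_n^{t}-\theta^*\|\leq \bar C_2\,\varepsilon(n,\delta)$ for all $t\geq \bar C_1\log(1/\varepsilon(n,\delta))$.

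The one step that requires care, and which I view as the main (albeit minor) obstacle, is the inductive verification that $\theta_n^t$ never leaves $\mathbb{B}(\theta^*,r)$, because both Lemma~\ref{lemma:optimization_homogeneous_fast} and Lemma~\ref{lemma:concentration_bound_population_sample_fast} only provide guarantees inside this ball. Assuming $\theta_n^t\in\mathbb{B}(\theta^*,r)$, the recursion gives
\[
\|\theta_n^{t+1}-\theta^*\| \;\leq\; \bar\kappa\,r + \bar c_5\,\varepsilon(n,\delta) \;\leq\; \bar\kappa\,r + (1-\bar\kappa)\,r \;=\; r,
\]
where the second inequality uses the hypothesis $\bar c_5\,\varepsilon(n,\delta)\leq (1-\bar\kappa)r$. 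This closes the induction, so the recursion is valid for every $t\geq 0$, and the argument above then delivers the conclusion. The high-probability qualifier comes from the single application of Assumption~\ref{assump:concentration_bound_fast}, which holds with probability $1-\delta$ uniformly on $\mathbb{B}(\theta^*,r)$, so no union bound over iterations is needed.
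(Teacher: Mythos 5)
Your proposal is correct and follows essentially the same route as the paper's own proof: the same triangle-inequality decomposition into the population contraction (Lemma~\ref{lemma:optimization_homogeneous_fast}) and the uniform deviation (Lemma~\ref{lemma:concentration_bound_population_sample_fast}), the same inductive check that the iterates remain in $\mathbb{B}(\theta^*,r)$ using $\bar{c}_5\varepsilon(n,\delta)\leq(1-\bar\kappa)r$, and the same unrolling of the linear recursion with $T$ chosen so that $\bar\kappa^{T}r\leq\varepsilon(n,\delta)$. No gaps; your handling of the ball-membership induction and the single high-probability event is exactly what the paper does.
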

\begin{proof}
With the triangle inequality, we have that
\begin{align*}
    \|\theta_n^{t+1} - \theta^*\| = & \|F_n(\theta_n^t) - \theta^*\|\\
    \leq & \|F_n(\theta_n^t) - F(\theta_n^t)\| + \|F(\theta_n^t) - \theta^*\| \\
    \leq & \sup_{\theta\in \mathbb{B}(\theta^*, r)}\|F_n(\theta) - F(\theta)\| + \bar{\kappa} \|\theta_n^t - \theta^*\| \\
    \leq & \bar{c}_5 \varepsilon(n, \delta) + \bar{\kappa} r \leq r.
\end{align*}
Hence, we know $\|\theta_n^t - \theta^*\| \leq r$ for all $t \in \mathbb{N}$. Furthermore, by repeating the above argument $T$ times, we can obtain that 
\begin{align*}
    \|\theta_n^T - \theta^*\| \leq & \bar{c}_5 \varepsilon(n, \delta) \left(\sum_{t=0}^{T-1}\bar{\kappa}^t\right) + \bar{\kappa}^{T}\|\theta_n^0 - \theta^*\|\\
    \leq & \frac{\bar{c}_5}{1-\bar{\kappa}} \varepsilon(n, \delta) + \bar{\kappa}^T r.
\end{align*}
By choosing $T \leq \frac{\log(r) + \log(1/\varepsilon(n, \delta))}{\log (1/\bar{\kappa})}$, we know $\bar{\kappa}^{T} r \leq \varepsilon(n, \delta)$, hence
\begin{align*}
      \|\theta_n^T - \theta^*\| \leq \left(\frac{\bar{c}_5}{1-\bar{\kappa}} + 1 \right) \varepsilon(n, \delta).
\end{align*}
Take $\bar{C}_1$, $\bar{C}_2$ accordingly, we conclude the proof.
\end{proof}
\section{Proofs of Main Results}
\label{sec:proof_main}
In this Appendix, we provide proofs for the results in the main text.
\subsection{Proof of Lemma~\ref{lemma:optimization_homogeneous}}
\label{sec:proof:theorem:optimization_homogeneous}
We start from the following lemma:
\begin{lemma}
\label{lem:obj_param_conversion}
Assume Assumption~\ref{assump:homogeneous_property} holds, we have that
\begin{align*}
    f(\theta) - f(\theta^*) \geq \frac{c_1\|\theta - \theta^*\|^{\alpha + 2}}{(\alpha + 1)(\alpha + 2)}.
\end{align*}
\end{lemma}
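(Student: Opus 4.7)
\textbf{Proof proposal for Lemma~\ref{lem:obj_param_conversion}.} The plan is to apply the second-order Taylor expansion of $f$ about $\theta^{*}$ with integral remainder and then lower-bound the Hessian term using the homogeneous assumption \ref{assump:homogeneous_property}. Since $\theta^{*}$ is the true parameter, which is also the global minimizer of the population loss $f$ (and by \ref{assump:homogeneous_property} the Hessian is positive semi-definite on $\mathbb{B}(\theta^{*}, r)$, so $f$ is locally convex there), we may use $\nabla f(\theta^{*}) = 0$. This reduces the Taylor expansion to a purely quadratic remainder that is amenable to the Hessian lower bound.

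First I would write, for any $\theta \in \mathbb{B}(\theta^{*}, r)$,
\begin{align*}
f(\theta) - f(\theta^{*}) = \int_{0}^{1} (1 - t)\, (\theta - \theta^{*})^{\top} \nabla^{2} f\bigl(\theta^{*} + t(\theta - \theta^{*})\bigr) (\theta - \theta^{*})\, dt,
\end{align*}
using $\nabla f(\theta^{*}) = 0$. Since $\theta^{*} + t(\theta - \theta^{*}) \in \mathbb{B}(\theta^{*}, r)$ for every $t \in [0, 1]$, Assumption~\ref{assump:homogeneous_property} applies along the entire segment and yields
\begin{align*}
(\theta - \theta^{*})^{\top} \nabla^{2} f\bigl(\theta^{*} + t(\theta - \theta^{*})\bigr) (\theta - \theta^{*}) \geq c_{1}\, \|t(\theta - \theta^{*})\|^{\alpha}\, \|\theta - \theta^{*}\|^{2} = c_{1}\, t^{\alpha}\, \|\theta - \theta^{*}\|^{\alpha + 2}.
\end{align*}

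Plugging this into the Taylor remainder and pulling the $\theta$-dependent factor outside the integral gives
\begin{align*}
f(\theta) - f(\theta^{*}) \geq c_{1}\, \|\theta - \theta^{*}\|^{\alpha + 2} \int_{0}^{1} (1 - t)\, t^{\alpha}\, dt.
\end{align*}
The integral is the Beta function $B(\alpha + 1, 2) = \Gamma(\alpha + 1)\Gamma(2) / \Gamma(\alpha + 3) = 1/\bigl((\alpha + 1)(\alpha + 2)\bigr)$, which produces exactly the claimed lower bound.

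I do not anticipate a hard step: the argument is a standard quadratic Taylor expansion combined with a Beta-integral evaluation. The only subtlety worth stating explicitly is the justification of $\nabla f(\theta^{*}) = 0$ (via local convexity plus the fact that $\theta^{*}$ is the population minimizer), together with the observation that the line segment from $\theta^{*}$ to $\theta$ stays inside $\mathbb{B}(\theta^{*}, r)$ so that Assumption~\ref{assump:homogeneous_property} can be invoked uniformly in $t$.
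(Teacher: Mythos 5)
Your proof is correct, but it takes a genuinely different route from the paper's. The paper introduces the comparison function $g(\theta) = f(\theta) - \frac{c_1\|\theta - \theta^*\|^{\alpha + 2}}{(\alpha + 1)(\alpha + 2)}$, computes the Hessian of the subtracted term explicitly, checks that its operator norm is at most $(\alpha+1)(\alpha+2)\|\theta-\theta^*\|^{\alpha}$ so that $\nabla^2 g \succeq 0$ by Assumption~\ref{assump:homogeneous_property}, and then concludes that $\theta^*$ minimizes the convex function $g$ since $\nabla g(\theta^*)=0$; the inequality is just $g(\theta^*)\le g(\theta)$ rearranged. You instead use the second-order Taylor expansion with integral remainder, bound the quadratic form pointwise along the segment by $c_1 t^{\alpha}\|\theta-\theta^*\|^{\alpha+2}$, and evaluate $\int_0^1(1-t)t^{\alpha}\,dt = \frac{1}{(\alpha+1)(\alpha+2)}$ as a Beta integral. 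Both arguments rest on the same two ingredients ($\nabla f(\theta^*)=0$ and the eigenvalue lower bound of \ref{assump:homogeneous_property}), and both correctly note that the segment from $\theta^*$ to $\theta$ stays in $\mathbb{B}(\theta^*,r)$. Your version is arguably more direct: it avoids differentiating $\|\theta-\theta^*\|^{\alpha+2}$ twice and bounding the resulting rank-one-plus-identity matrix, and it makes transparent where the constant $\frac{1}{(\alpha+1)(\alpha+2)}$ comes from. The paper's version buys a reusable convexity framing (the comparison function $g$ is itself convex on the ball), but for this lemma the two are equally rigorous and yield the identical constant.
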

\begin{proof}
Consider $g(\theta) = f(\theta) - \frac{c_1\|\theta - \theta^*\|^{\alpha + 2}}{(\alpha + 1)(\alpha + 2)}$. With Assumption~\ref{assump:homogeneous_property}, we know that 
\begin{align*}
    \nabla^2 g(\theta) = \nabla^2 f(\theta) -  \frac{c_1}{(\alpha + 1)(\alpha + 2)}\left(\alpha(\alpha + 2) \|\theta-\theta^*\|^{\alpha - 2} (\theta-\theta)^* (\theta-\theta^*)^\top + (\alpha + 2)\|\theta - \theta^*\|^{\alpha} I\right) \succeq{0},
\end{align*}
as the operator norm of $\alpha(\alpha + 2) \|\theta - \theta^*\|^{\alpha - 2} (\theta-\theta^*)(\theta-\theta^*)^\top + (\alpha + 2)\|\theta\|^{\alpha} I $ is less than $(\alpha+1)(\alpha + 2) \|\theta - \theta^*\|^{\alpha}$. Meanwhile, we have that
\begin{align*}
    \nabla g(\theta) = \nabla f(\theta) - \frac{c_1 \|\theta-\theta^*\|^{\alpha}}{\alpha + 1} (\theta - \theta^*).
\end{align*}
As $\nabla f(\theta^*) = 0$, we know $\nabla g(\theta^*) = 0$, which means $\theta^*$ is the minimizer of $g$. Hence,
\begin{align*}
    f(\theta^*) = g(\theta^*) \leq g(\theta) = f(\theta) - \frac{c_1\|\theta - \theta^*\|^{\alpha + 2}}{(\alpha + 1)(\alpha + 2)},
\end{align*}
which means
\begin{align*}
    f(\theta) - f(\theta^*) \geq \frac{c_1\|\theta - \theta^*\|^{\alpha + 2}}{(\alpha + 1)(\alpha + 2)}.
\end{align*}
As a consequence, we obtain the conclusion of Lemma~\ref{lem:obj_param_conversion}. 
\end{proof}
\noindent
Now, we prove Lemma~\ref{lemma:optimization_homogeneous}. Notice that
\begin{align*}
    \|F(\theta) - \theta^*\|^2 = & \left\|\theta - \frac{\eta}{\lambda_{\max}(\nabla^2 f(\theta))} \nabla f(\theta) - \theta^*\right\|^2\\
    = & \|\theta - \theta^*\|^2 - \frac{2\eta}{\lambda_{\max}(\nabla^2 f(\theta))} \langle \nabla f(\theta), \theta - \theta^*\rangle + \frac{\eta^2}{\lambda_{\max}^2(\nabla^2(f(\theta)))}\|\nabla f(\theta)\|^2\\
    = & \|\theta - \theta^*\|^2 - \frac{\eta}{\lambda_{\max}(\nabla^2 f(\theta))}\left( 2\langle \nabla f(\theta), \theta - \theta^*\rangle - \frac{\eta}{\lambda_{\max}(\nabla^2 f(\theta))}\|\nabla f(\theta)\|^2\right)\\
    \leq & \|\theta - \theta^*\|^2 - \frac{\eta}{\lambda_{\max}(\nabla^2 f(\theta))}\left( 2(f(\theta) - f(\theta^*)) - \frac{\eta}{\lambda_{\max}(\nabla^2 f(\theta))}\|\nabla f(\theta)\|^2\right),
\end{align*}
where the last inequality is due to the convexity. With Assumption \ref{assump:homogeneous_property}, we have that
\begin{align*}
    \|\nabla f(\theta)\| = & \left\|\int_0^1 \nabla^2 f(\theta^* + t(\theta - \theta^*))(\theta - \theta^*) d t\right\|\\
    \leq & \int_0^1\left\| \nabla^2 f(\theta^* + t(\theta - \theta^*))(\theta - \theta^*) \right\|d t\\
    \leq & \int_0^1 \lambda_{\max}(\nabla^2 f(\theta^* + t(\theta - \theta^*)))\|\theta - \theta^*\|d t\\
    \leq & \int_0^1 c_2 t^{\alpha}\|(\theta - \theta^*)\|^{\alpha}\|\theta - \theta^*\|d t\\
    \leq & \frac{c_2}{\alpha + 1} \|\theta - \theta^*\|^{\alpha + 1}.
\end{align*}
\noindent
As $\eta\leq \frac{c_1^2}{2c_2^2} \leq \frac{c_1^2(\alpha + 1)}{c_2^2(\alpha + 2)}$, we have that
\begin{align*}
    \frac{\eta}{\lambda_{\max} (\nabla^2 f(\theta))}\left(f(\theta) - f(\theta^*) - \frac{\eta}{\lambda_{\max} (\nabla^2 f(\theta))} \|\nabla f(\theta)\|^2\right) & \\
    & \hspace{- 6 em} \geq  \frac{\eta}{c_2}\left(\frac{c_1}{(\alpha + 1)(\alpha + 2)} - \frac{\eta c_2^2}{c_1(\alpha + 1)^2}\right)\|\theta - \theta^*\|^2.
\end{align*}
Hence, we find that
\begin{align*}
    \|F(\theta) - \theta^*\|^2 \leq \left(1 - \frac{\eta}{c_2}\left(\frac{c_1}{(\alpha + 1)(\alpha + 2)} - \frac{\eta c_2^2}{c_1(\alpha + 1)^2}\right)\right)\|\theta - \theta^*\|^2.
\end{align*}
Take $\kappa$ accordingly, we conclude the proof.
\subsection{Proof of Lemma~\ref{lemma:concentration_bound_population_sample}}
\label{sec:proof:lemma:concentration_bound_population_sample}
Notice that
\begin{align*}
    \|F_n(\theta) - F(\theta)\| = & \left\|\frac{\eta}{\lambda_{\max}(\nabla^2 f_n(\theta))}\nabla f_n(\theta) - \frac{\eta}{\lambda_{\max}(\nabla^2 f(\theta))} \nabla f(\theta)\right\|\\
    = & \eta \left\|\frac{\nabla f_n(\theta) \lambda_{\max}(\nabla^2 f(\theta)) - \nabla f(\theta) \lambda_{\max}(\nabla^2 f_n(\theta))}{\lambda_{\max}(\nabla^2 f_n(\theta)) \lambda_{\max}(\nabla^2 f(\theta))}\right\|\\
    \leq & \eta \left(\left\|\frac{\nabla f(\theta)(\lambda_{\max}(\nabla^2 f(\theta)) - \lambda_{\max}(\nabla^2 f_n(\theta)))}{\lambda_{\max}(\nabla^2 f_n(\theta)) \lambda_{\max}(\nabla^2 f(\theta))}\right\| + \left\|\frac{\nabla f_n(\theta) - \nabla f(\theta)}{\lambda_{\max}(\nabla^2 f_n(\theta))}\right\| \right).
\end{align*}
For the term $\|\nabla f_n(\theta) - \nabla f(\theta)\|$, we have that
\begin{align*}
    \|\nabla f_n(\theta) - \nabla f(\theta)\| \leq & \|\nabla f_n(\theta^*) - \nabla f(\theta^*)\| \\
    + & \left\|\int_{0}^{1} (\nabla^2 f_n(\theta^* + t(\theta - \theta^*)) - \nabla^2 f(\theta^* + t(\theta - \theta^*)) )(\theta - \theta^*) dt\right\|\\
    \leq & \int_{0}^1\|(\nabla^2 f_n(\theta^* + t(\theta - \theta^*)) - \nabla^2 f(\theta^* + t(\theta - \theta^*)) )(\theta - \theta^*)\| dt\\
    \leq & \int_{0}^1\|\nabla^2 f_n(\theta^* + t(\theta - \theta^*)) - \nabla^2 f(\theta^* + t(\theta - \theta^*))\|_{\mathrm{op}}\|\theta - \theta^*\| dt \\
    \leq & \int_{0}^1 c_3 t^{\gamma} \varepsilon(n, \delta)\|\theta - \theta^*\|^{\gamma+1} dt\\
    = & \frac{c_3\|\theta - \theta^*\|^{\gamma + 1}\varepsilon(n, \delta)}{\gamma + 1}.
\end{align*}
Meanwhile, it's straightforward to show that
\begin{align*}
    |\lambda_{\max}(\nabla^2 f_n(\theta)) - \lambda_{\max}(\nabla f(\theta))|\leq 3c_3 r^{\gamma} \varepsilon(n, \delta).
\end{align*}
Hence, we have that
\begin{align*}
    \|F_n(\theta) - F(\theta)\|
    \leq & \eta \left(\left\|\frac{\nabla f(\theta)(\lambda_{\max}(\nabla^2 f(\theta)) - \lambda_{\max}(\nabla^2 f_n(\theta)))}{\lambda_{\max}(\nabla^2 f_n(\theta)) \lambda_{\max}(\nabla^2 f(\theta))}\right\| + \left\|\frac{\nabla f_n(\theta) - \nabla f(\theta)}{\lambda_{\max}(\nabla^2 f_n(\theta))}\right\| \right)\\
    \leq & \eta \left(\frac{3c_2 c_3  r^{\gamma + 1 - \alpha} \varepsilon(n, \delta)}{(\alpha + 1)(c_1 r^\alpha - 3c_3 r^\gamma \varepsilon(n, \delta)) c_1 r^{\alpha}} + \frac{c_3 r^\gamma \varepsilon(n, \delta)}{(\gamma + 1)(c_1 r^\alpha - 3c_3 r^{\gamma+1} \varepsilon(n, \delta))}\right).
\end{align*}
As $r\geq \left(\frac{6c_3 \varepsilon(n, \delta)}{c_1}\right)^{1/(\alpha - \gamma)}$, we can further have
\begin{align*}
    \|F_n(\theta) - F(\theta)\|
    \leq & \eta \left(\frac{6c_2 c_3}{(\alpha + 1) c_1^2}+\frac{2c_3}{(\gamma + 1)c_1}\right) r^{\gamma + 1 - \alpha} \varepsilon(n, \delta).
\end{align*}
Taking $c_4$ accordingly, we conclude the proof.
\section{Proof of Generalized Linear Models}
\label{sec:proof:generalized_linear_models}
In this appendix, we provide the proof for the NormGD in generalized linear models.
\subsection{Homogeneous assumptions}
\label{sec:proof:low_signal_generalized_linear_homogeneous}
Based on the formulation of the population loss function $\mathcal{L}$ in equation~\eqref{eq:population_no_signal}, we have
\begin{align*}
    &\nabla \mathcal{L}(\theta) = 2p(2p-1)!!(\theta-\theta^*) \|\theta - \theta^{*}\|^{2p - 2}, \\
    & \nabla^2 \mathcal{L}(\theta) = (2p(2p-1)!!)\|\theta - \theta^*\|^{2p-4} \left(\|\theta - \theta^*\|^2 I_{d} + (2p-4)(\theta - \theta^*)(\theta - \theta^*)^\top\right).
\end{align*}
Notice that, $\theta - \theta^*$ is an eigenvector of $\|\theta - \theta^*\|^2 I_{d} + (2p-4)(\theta - \theta^*)(\theta - \theta^*)^\top $ with eigenvalue $(2p-3)\|\theta - \theta^*\|^2$, and any vector that is orthogonal to $\theta - \theta^*$ (which forms a $d-1$ dimensional subspace) is an eigenvector of $\|\theta - \theta^*\|^2 I_{d} + (2p-4)(\theta - \theta^*)(\theta - \theta^*)^\top$ with eigenvalue $\|\theta - \theta^*\|^2$. Hence, we have that
\begin{align*}
    \lambda_{\max}(\|\theta - \theta^*\|^2 I_{d} + (2p-4)(\theta - \theta^*)(\theta - \theta^*)^\top) = & (2p-3)\|\theta - \theta^*\|^2,\\
    \lambda_{\min}(\|\theta - \theta^*\|^2 I_{d} + (2p-4)(\theta - \theta^*)(\theta - \theta^*)^\top) = & \|\theta - \theta^*\|^2,
\end{align*}
which shows that $\mathcal{L}(\theta)$ satisfies the homogeneous assumption.

\subsection{Uniform concentration bound}
\label{sec:proof:low_signal_generalized_linear_uniform_concentration}
The proof for the concentration bound~\eqref{eq:generalized_linear_concentration_first_order} is in Appendix D.1 of~\citep{Tongzheng_2022}; therefore, it is omitted. We focus on proving the uniform concentration bounds~\eqref{eq:generalized_linear_concentration_second_order} and~\eqref{eq:concentration_Hessian_generalized_model} for the Hessian matrix $\nabla^2 \mathcal{L}_{n}(\theta)$ around the Hessian matrix $\nabla^2 \mathcal{L}(\theta)$ under both the strong and low signal-to-noise regimes. Indeed, we would like to show the following uniform concentration bound that captures both the bounds~\eqref{eq:generalized_linear_concentration_second_order} and~\eqref{eq:concentration_Hessian_generalized_model}.
\begin{lemma}
\label{lemma:unified_concentration_bound_generalized_linear}
There exist universal constants $C_{1}$ and $C_{2}$ such that as long as $n \geq C_{1} (d \log(d/ \delta))^{2p}$ we obtain that
\begin{align}
    \sup_{\theta \in \mathbb{B}(\theta^{*},r)} \|\nabla^2 \mathcal{L}_n(\theta) - \nabla^2 \mathcal{L}(\theta)\|_{\text{op}} \leq C_{2} \parenth{(r + \|\theta^{*}\|)^{p - 2} + (r + \|\theta^{*}\|)^{2p - 2}} \sqrt{\frac{d + \log(1/\delta)}{n}}. \label{eq:unified_concentration_bound_generalized_linear}
\end{align}
\end{lemma}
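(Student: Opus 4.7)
The plan is to compute $\nabla^2 \mathcal{L}_n(\theta)$ explicitly, decompose $\nabla^2 \mathcal{L}_n(\theta) - \nabla^2 \mathcal{L}(\theta)$ into three zero-mean matrix-valued processes, and then bound each uniformly over $\mathbb{B}(\theta^*, r)$ by combining a discretization argument with a matrix Bernstein inequality tailored to polynomials of Gaussians.

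Differentiating the sample least-squares loss twice and substituting $Y_i = (X_i^\top \theta^*)^p + \varepsilon_i$ yields
\begin{align*}
\nabla^2 \mathcal{L}_n(\theta) &= \frac{p(2p-1)}{n}\sum_{i=1}^n (X_i^\top\theta)^{2p-2} X_i X_i^\top \\
&\quad - \frac{p(p-1)}{n}\sum_{i=1}^n (X_i^\top\theta)^{p-2}(X_i^\top\theta^*)^p X_i X_i^\top \\
&\quad - \frac{p(p-1)}{n}\sum_{i=1}^n \varepsilon_i (X_i^\top\theta)^{p-2} X_i X_i^\top.
\end{align*}
Subtracting the population Hessian gives a decomposition $T_1(\theta) + T_2(\theta) + T_3(\theta)$ into the centered versions of these three sums. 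Using $X_i \sim \mathcal{N}(0, I_d)$, a direct calculation shows that the entries of the first two summands are polynomials of Gaussians of degree $2p$ and scale as $(r+\|\theta^*\|)^{2p-2}$, while the entries of the third summand scale as $|\varepsilon_i|(r+\|\theta^*\|)^{p-2}$; this is precisely the origin of the two terms $(r+\|\theta^*\|)^{p-2} + (r+\|\theta^*\|)^{2p-2}$ in the target bound.

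For the uniform bound, I would take a $(1/n)$-net $\mathcal{N}$ of $\mathbb{B}(\theta^*, r)$ with $|\mathcal{N}| \leq (3nr)^d$ and apply a matrix Bernstein inequality at each net point. Since the entries of $(X_i^\top \theta)^{2p-2} X_i X_i^\top$ are sub-Weibull of order $1/p$ with Orlicz norm controlled by a constant multiple of $(r+\|\theta^*\|)^{2p-2}$, a sub-Weibull matrix Bernstein inequality yields, at each fixed $\theta$, a deviation of order $(r+\|\theta^*\|)^{2p-2}\sqrt{(d+\log(1/\delta))/n}$ provided $n \geq C_1 (d \log(d/\delta))^{2p}$; the exponent $2p$ in the sample-size requirement is precisely what is needed to absorb the $\psi_{1/p}$ tail behavior of these polynomial-in-Gaussian summands. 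An analogous estimate for $T_3$, conditioning on the $X_i$'s and using Gaussian concentration for the $\varepsilon_i$'s, produces a deviation of order $(r+\|\theta^*\|)^{p-2}\sqrt{(d+\log(1/\delta))/n}$. To pass from the net to the full ball, I would show that $\theta \mapsto \nabla^2 \mathcal{L}_n(\theta) - \nabla^2 \mathcal{L}(\theta)$ is Lipschitz in operator norm on the high-probability event $\{\max_i \|X_i\| \leq C\sqrt{d\log n}\}$; the Lipschitz constant is only polynomial in $n$, so the $(1/n)$-net resolution makes the net-to-ball slack negligible compared to the target rate.

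The main obstacle is the pointwise Bernstein step, because the summands are polynomials of Gaussians of degree as high as $2p$ and so are much heavier-tailed than sub-Gaussians; achieving the clean $\sqrt{(d+\log(1/\delta))/n}$ rate with the correct polynomial scaling in $(r+\|\theta^*\|)$ requires a careful truncation-and-Bernstein argument on sub-Weibull random matrices, and it is this interplay between heavy tails and the sample size that dictates the threshold $n \geq C_1(d\log(d/\delta))^{2p}$. Once this pointwise bound is established, union-bounding over $\mathcal{N}$ and summing the three contributions is routine and produces the stated operator-norm bound.
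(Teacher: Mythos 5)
Your decomposition of $\nabla^2\mathcal{L}_n(\theta)-\nabla^2\mathcal{L}(\theta)$ is exactly the one the paper uses: the centered $(X_i^\top\theta)^{2p-2}X_iX_i^\top$ term, the centered $(X_i^\top\theta^*)^p(X_i^\top\theta)^{p-2}X_iX_i^\top$ term, and the noise term $\varepsilon_i(X_i^\top\theta)^{p-2}X_iX_i^\top$, with the same accounting for where $(r+\|\theta^*\|)^{p-2}$ versus $(r+\|\theta^*\|)^{2p-2}$ comes from. Where you diverge is the concentration machinery. The paper does not use a fine net over the parameter ball with matrix Bernstein; instead it writes the operator norm variationally as a supremum over $u$ in a constant-resolution ($1/8$) cover of $\mathbb{S}^{d-1}$, symmetrizes with Rademacher variables, handles the supremum over $\theta$ by a recursive peeling bound of the form $\mathcal{R}(\mathbb{S}^{d-1})\le 2\mathcal{R}(\mathcal{N}(t))+3^{p-2}t\,\mathcal{R}(\mathbb{S}^{d-1})$ with $t=3^{-p+1}$ (exploiting that the summands are polynomials in $\theta$), and then controls $q$-th moments via Khintchine's inequality and bounds on moments of polynomials of Gaussians, finishing with Markov at $q\asymp d+\log(1/\delta)$. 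Your route (pointwise sub-Weibull matrix Bernstein plus a union bound over a $(1/n)$-net plus a Lipschitz argument on a good event) is a legitimate alternative and correctly identifies why $n\ge C_1(d\log(d/\delta))^{2p}$ is the right threshold: it is what makes the heavy-tailed correction term $K\,t^{p}/n$ subordinate to the Gaussian-like term.

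The one concrete shortfall is the resolution of your net. A $(1/n)$-net of $\mathbb{B}(\theta^*,r)$ has cardinality $(Cnr)^d$, so the union bound forces each pointwise Bernstein bound to hold at confidence $\delta/(Cnr)^d$, which inflates the leading deviation to order $\sqrt{(d\log n+\log(1/\delta))/n}$ rather than the stated $\sqrt{(d+\log(1/\delta))/n}$; the Lipschitz step cannot recover this because the loss is already in the net, not in the net-to-ball transfer. To match the lemma as stated you need $\log|\mathcal{N}|=O(d)$, which is exactly what the paper's constant-resolution cover plus polynomial-peeling recursion (or, alternatively, a chaining/moment-method argument at order $q\asymp d+\log(1/\delta)$) delivers. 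Everything else in your outline, including the conditioning-on-$X_i$ treatment of the $\varepsilon_i$ term and the truncation-and-Bernstein handling of $\psi_{1/p}$ tails, is sound and would go through once that logarithmic slack is repaired.
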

\begin{proof}[Proof of Lemma~\ref{lemma:unified_concentration_bound_generalized_linear}]
Direct calculation shows that
\begin{align*}
    \nabla^2 \mathcal{L}_n(\theta) = & \frac{1}{n}\sum_{i=1}^n\left(p(X_i^\top \theta)^{2p-2} - p(p-1) Y_i (X_i^\top \theta)^{p-2} \right) X_i X_i^\top,\\
    \nabla^2 \mathcal{L}(\theta) = & \mathbb{E}\left[ p(X^\top \theta)^{2p-2}- p(p-1)(X^\top \theta^*)^p (X^\top \theta)^{p-2} XX^\top\right].
\end{align*}
Therefore, we obtain
\begin{align*}
     \nabla^2 \mathcal{L}_n(\theta) - \nabla^2 \mathcal{L}(\theta) = & p\left(\frac{1}{n} \sum_{i=1}^n (X_i^\top \theta)^{2p-2}X_iX_i^\top - \mathbb{E}\left[(X^\top \theta)^{2p-2} XX^\top\right]\right) \\
     & - p(p-1) \left(\frac{1}{n}\sum_{i=1}^n Y_i (X_i^\top \theta)^{p-2}  X_i X_i^\top - \mathbb{E}\left[(X^\top \theta^*)^p(X^\top \theta)^{p-2} XX^\top\right]\right).
\end{align*}
Using the triangle inequality with the operator norm, the above equation leads to
\begin{align}
    \sup_{\theta \in \mathbb{B}(\theta^{*}, r)} \|\nabla^2 \mathcal{L}_n(\theta) - \nabla^2 \mathcal{L}(\theta)\|_{\text{op}} \leq C (A_{1} + A_{2} + A_{3}), \label{eq:key_inequality_generalized_linear_first}
\end{align}
where $C$ is some universal constant and $A_{1}, A_{2}, A_{3}$ are defined as follows:
\begin{align}
    A_{1} & = \sup_{\theta \in \mathbb{B}(\theta^{*}, r)} \biggr \|\frac{1}{n} \sum_{i=1}^n (X_i^\top \theta)^{2p-2}X_iX_i^\top - \mathbb{E}\left[(X^\top \theta)^{2p-2} XX^\top\right] \biggr \|_{\text{op}}, \nonumber \\
    A_{2} & = \sup_{\theta \in \mathbb{B}(\theta^{*}, r)} \biggr \|\frac{1}{n}\sum_{i=1}^n (Y_i - (X_{i}^{\top} \theta^{*})^{p}) (X_i^\top \theta)^{p-2}  X_i X_i^\top \biggr \|_{\text{op}}, \nonumber \\
    A_{3} & =  \sup_{\theta \in \mathbb{B}(\theta^{*}, r)} \biggr \| \frac{1}{n}\sum_{i=1}^n (X_{i}^{\top} \theta^{*})^{p} (X_i^\top \theta)^{p-2}  X_i X_i^\top - \mathbb{E}\left[(X^\top \theta^*)^p(X^\top \theta)^{p-2} XX^\top\right] \biggr \|_{\text{op}}. \label{eq:key_inequality_generalized_linear_second}
\end{align}
With variational characterization of the operator norm and upper bound the norm of any $\theta \in \mathbb{B}(\theta^*, r)$ with $r + \|\theta^{*}\|$, we have 
\begin{align*}
    A_{1} & \leq (r +\|\theta^{*}\|)^{2p - 2} T_{1}, \\
    A_{2} & \leq (r +\|\theta^{*}\|)^{p - 2} T_{2}, \\
    A_{3} & \leq (r +\|\theta^{*}\|)^{p - 2} T_{3},
\end{align*}
where the terms $T_{1}, T_{2}, T_{3}$ are defined as follows:
\begin{align*}
    & T_1:= \sup_{u\in\mathbb{S}^{d-1}, \theta\in\mathbb{S}^{d-1}}\left|\frac{1}{n} \sum_{i=1}^n (X_i^\top \theta)^{2p-2}(X_i^\top u)^2 - \mathbb{E}\left[(X^\top \theta)^{2p-2} (X^\top u)^2\right]\right|\\
    & T_2:= \sup_{u\in\mathbb{S}^{d-1}, \theta\in\mathbb{S}^{d-1}}\left|\frac{1}{n} \sum_{i=1}^n (Y_i - (X_i^\top \theta^*)^p) (X_i^\top \theta)^{p-2}(X_i^\top u)^2\right|\\
    & T_3:=\sup_{u\in\mathbb{S}^{d-1}, \theta\in\mathbb{S}^{d-1}}\left|\frac{1}{n} \sum_{i=1}^n  (X_i^\top \theta^*)^p (X_i^\top \theta)^{p-2}(X_i^\top u)^2 - \mathbb{E}\left[(X^\top \theta^*)^p(X^\top \theta)^{p-2} (X^\top u)^2\right]\right|,
\end{align*}
where $\mathbb{S}^{d-1}$ is the unit sphere in $\mathbb{R}^d$. We know consider the high probability bound of each individual terms following the proof strategy in \citep{Tongzheng_2022}.
\paragraph{Bound for $T_2$:} Assume $U$ is a $1/8$-cover of $\mathbb{S}^{d-1}$ under $\|\cdot\|_2$ with at most $17^d$ elements, the standard discretization arguments (e.g. Chapter 6 in \citep{Wainwright_nonasymptotic}) show
\begin{align*}
    \sup_{u\in\mathbb{S}^{d-1}}\left|\frac{1}{n} \sum_{i=1}^n (Y_i - (X_i^\top \theta^*)^p) (X_i^\top \theta)^{p-2}(X_i^\top u)^2\right| \leq & 2\sup_{u\in U}\left|\frac{1}{n} \sum_{i=1}^n (Y_i - (X_i^\top \theta^*)^p) (X_i^\top \theta)^{p-2}(X_i^\top u)^2\right|.
\end{align*}
With a symmetrization argument, we know for any even integer $q \geq 2$,
\begin{align*}
    & \mathbb{E}\left(\sup_{u\in \mathbb{S}^{d-1}}\left|\frac{1}{n} \sum_{i=1}^n (Y_i - (X_i^\top \theta^*)^p) (X_i^\top \theta)^{p-2}(X_i^\top u)^2\right|\right)^q \\
    \leq & \mathbb{E}\left(\sup_{u\in\mathbb{S}^{d-1}}\left|\frac{2}{n} \sum_{i=1}^n \varepsilon_i(Y_i - (X_i^\top \theta^*)^p)(X_i^\top \theta)^{p-2} (X_i^\top u)^2\right|\right)^q,
\end{align*}
where $\{\varepsilon_i\}_{i\in [n]}$ is a set of i.i.d. Rademacher random variables. Furthermore, for a compact set $\Omega$, we define
\begin{align*}
    \mathcal{R}(\Omega) := \sup_{\theta\in\Omega, p^\prime\in[1, p]} \left|\frac{2}{n}\sum_{i=1}^n \varepsilon_i(Y_i - (X_i^\top \theta^*)^p)(X_i^\top \theta)^{p^\prime-2} (X_i^\top u)^2\right|,
\end{align*}
where $\mathcal{N}(t)$ is a $t$-cover of $\mathbb{S}^{d-1}$ under $\|\cdot\|_2$. Then we have
\begin{align*}
    \mathcal{R}(\mathbb{S}^{d-1}) \leq & 2\mathcal{R}(\mathcal{N}(t)) + 3^{p - 2} t \mathcal{R}(\mathbb{S}^{d-1}).
\end{align*}
By taking $t=3^{-p + 1}$, we obtain that $\mathcal{R}(\mathbb{S}^{d-1}) \leq 3 \mathcal{R}(\mathcal{N}(3^{-p + 1}))$. Furthermore, with the union bound, for any $q\geq 1$ we have that
\begin{align*}
    \sup_{\theta\in\mathbb{S}^{d-1}, p^\prime\in [2, p]} \mathbb{E}\left[\left|\frac{2}{n}\sum_{i=1}^n \varepsilon_i (Y_i -(X_i^\top \theta^*)^p((X_i^\top \theta)^{p^\prime-2} (X_i^\top u)^2\right|^q\right] 
    \leq \frac{\mathbb{E}[\mathcal{R}^q(\mathcal{N}(3^{-p + 1}))]}{p|\mathcal{N}(3^{-p + 1})|}.
\end{align*}
Therefore, we only need to consider $\mathbb{E}\left[\left|\frac{2}{n}\sum_{i=1}^n \varepsilon_i (Y_i - (X_i^\top \theta^*)^p)(X_i^\top \theta)^{p^\prime-2} (X_i^\top u)^2\right|^q\right] $. An application of Khintchine’s inequality \citep{boucheron2013concentration} demonstrates that we can find a universal constant $C$ such that for all $p^\prime\in[2, p]$, we have
\begin{align*}
    \mathbb{E}\left[\left|\frac{2}{n}\sum_{i=1}^n \varepsilon_i (Y_i -(X_i^\top \theta^*)^p)(X_i^\top \theta)^{p^\prime-2} (X_i^\top u)^2\right|^q\right] & \\
    &  \hspace{- 5 em} \leq \mathbb{E}\left[\left(\frac{Cq}{n^2}\sum_{i=1}^n (Y_i - (X_i^\top \theta^*)^{p})^2(X_i^\top \theta)^{2(p^\prime-2)} (X_i^\top u)^4\right)^{q/2}\right].
\end{align*}
From the assumptions on $Y_{i}$ and $X_{i}$ for all $i \in [n]$, we have
\begin{align*}
    \mathbb{E}\left[(Y_i - (X_i^\top \theta^*)^{p})^2(X_i^\top \theta)^{2(p^\prime-2)}(X_i^\top u)^4\right] \leq & (2p^\prime)^{ p^\prime},\\
    \mathbb{E}\left[\left((Y - (X_i^\top \theta^*)^{p})^2(X_i^\top \theta)^{2(p^\prime-4)}(X_i^\top u)^4\right)^{q/2}\right] \leq & (2p^\prime q)^{ p^\prime q}.
\end{align*}
From Lemma 2 in \citep{mou2019diffusion}, we have
\begin{align*}
    & \hspace{- 2 em} \left|\frac{1}{n}\sum_{i=1}^n \left((Y - (X_i^\top \theta^*)^{p})^2(X_i^\top \theta)^{2(p^\prime-2)} (X_i^\top u)^4\right)^{q/2} - \mathbb{E}\left[(Y_i - (X_i^\top \theta^*)^{p})^2 (X_i^\top \theta)^{2(p^\prime-2)}(X_i^\top u)^4\right]\right|\\
    & \leq (8p^\prime)^{p^\prime} \sqrt{\frac{\log 4/\delta}{n}} + (2p^\prime \log (n/\delta))^{p^\prime} \frac{\log 4/\delta}{n}
\end{align*}
with probability at least $1-\delta$. Hence, we find that
\begin{align*}
    & \hspace{- 5 em} \mathbb{E}\left[\left(\frac{1}{n}\sum_{i=1}^n(Y_i - (X_i^\top \theta^*)^{p})^2(X_i^\top \theta)^{2(p^\prime-1)} (X_i^\top u)^4\right)^{q/2}\right]\\
    \leq & (4p^\prime)^{p^\prime q} + 2^{q/2}\int_{0}^{\infty} \mathbb{P}\biggr[\biggr|\sum_{i=1}^n \left((Y_i - (X_i^\top \theta^*)^{p})^2 (X_i^\top \theta)^{2(p^\prime-2)} (X_i^\top u)^4\right)^{q/2} \\
    & - \mathbb{E}\left[(Y_i - (X_i^\top \theta^*)^{p})^2(X_i^\top \theta)^{2(p^\prime-2)}(X_i^\top u)^4\right]\biggr|\geq \lambda\biggr] d\lambda^{q/2} \\
    \leq & (4p^\prime )^{p^\prime q} + C^\prime p ^\prime q\left((32p^\prime)^{ p^\prime q/2} n^{-q/4})\Gamma(q/4) \right.\\
    & \left.+ (8p^\prime)^{(p^\prime + 1)q/2}n^{-q/2}\left((\log n)^{(p + 1)q/2} + \Gamma((p^\prime + 1)q/2)\right)\right),
\end{align*}
where $C^\prime$ is a universal constant and $\Gamma(\cdot)$ is the Gamma function. As a result, we have that
\begin{align*}
    & \hspace{- 3 em} \mathbb{E}\left[\left|\left(\frac{1}{n} \sum_{i=1}^n (Y_i - (X_i^\top \theta^*)^p) (X_i^\top \theta)^{p-2} X_i^\top u - \mathbb{E}[(X^\top \theta^*)^p (X^\top \theta)^{p-1} (X^\top u)^4 ] \right)\right|^q\right]\\
    & \leq p \cdot 3^{dp + 2d + q} \left(\frac{Cq}{n}\right)^{q/2} \left((4p)^{pq} + 2C^\prime pq \left(32p\right)^{pq} n^{-q/4} \Gamma(q/4) \right.\\
    & \left.+ (8p)^{(p+1)q/2}n^{-q/2}\left((\log n)^{(p+1)q/2} + \Gamma((p+1)q/2) \right)\right),
\end{align*}
for any $u\in U$. Eventually, with union bound, we obtain
\begin{align*}
    & \hspace{-1 em} \left(\mathbb{E}\left[\left\|\frac{1}{n} \sum_{i=1}^n (Y_i - (X_i^\top \theta^*)^p) (X_i^\top \theta)^{p-2} X_i X_i^\top\right\|^q\right]\right)^{1/q}\\
    & \leq 2 \cdot (17)^{d/q} \cdot 3^{\frac{dp + 2d}{q}+1} \left[\sqrt{\frac{C_p q}{n}} + \left(\frac{C_p q}{n}\right)^{3/4} + \frac{C_p}{n}(\log n + q)^{(p+1)/2}\right],
\end{align*}
where $C_p$ is a universal constant that only depends on $p$. Take $q = d(p + 3) + \log (1/\delta)$ and use the Markov inequality, we get the following bound on the second term $T_{2}$ with probability $1 - \delta$:
\begin{align}
    T_{2} \leq c_{1} \biggr(\sqrt{\frac{d + \log(1/\delta)}{n}} + \frac{1}{n} \left(d + \log \left(\frac{n}{\delta}\right) \right)^{\frac{p + 1}{2}} \biggr). \label{eq:bound_T2} 
\end{align}
\paragraph{Bounds for $T_1$ and $T_{3}$:} Using the same argument as that of $T_{2}$, we obtain the following high probability bounds for $T_{1}$ and $T_{3}$:
\begin{align}
     T_{1} & \leq c_{2} \biggr(\sqrt{\frac{d + \log(1/\delta)}{n}} + \frac{1}{n} \left(d + \log \left(\frac{n}{\delta}\right) \right)^{\frac{2p + 1}{2}} \biggr). \label{eq:bound_T1}, \\
    T_{3} & \leq c_{3} \biggr(\sqrt{\frac{d + \log(1/\delta)}{n}} + \frac{1}{n} \left(d + \log \left(\frac{n}{\delta}\right) \right)^{\frac{2p + 1}{2}} \biggr). \label{eq:bound_T3} 
\end{align}
with probability $1 - \delta$ where $c_{2}$ and $c_{3}$ are some universal constants. Plugging the bounds~\eqref{eq:bound_T2},~\eqref{eq:bound_T1}, and~\eqref{eq:bound_T3} to the bounds~\eqref{eq:key_inequality_generalized_linear_first} and~\eqref{eq:key_inequality_generalized_linear_second}, and use the condition that $n\geq C_1(d\log(d/\delta))^{2p}$ we obtain the conclusion of the lemma.
\end{proof}
\section{Proof of Gaussian Mixture Models}
\label{sec:proof:mixture_model}
In this appendix, we provide the proof for the NormGD in Gaussian mixture models.
\subsection{Homogeneous assumptions}
\label{sec:proof:low_signal_mixture_model_homogeneous}
The proof for the claim~\eqref{eq:low_signal_generalized_linear_maximum_eigenvalue} is direct from Appendix A.2.2 from~\cite{Tongzheng_2022}. Therefore, we only focus on proving the claim~\eqref{eq:low_signal_generalized_linear_minimum_eigenvalue}. Indeed, direct calculation shows that
\begin{align*}
    \nabla^2 \bar{\mathcal{L}}(\theta) = \frac{1}{\sigma^2} \parenth{I_{d} - \frac{1}{\sigma^2} \mathbb{E}\parenth{X X^{\top} \text{sech}^2 \parenth{\frac{X^{\top}\theta}{\sigma^2}}}}.
\end{align*}
We can simplify the computation of $\nabla^2 \bar{\mathcal{L}}$ via a change of coordinates. In particular, we choose an orthogonal matrix $Q$ such that $Q \theta = \|\theta\| e_1$. Here, $e_1 = (1, 0, \ldots, 0)$ is the first canonical basis in dimension $d$. We then denote $W = \frac{QX}{\sigma}$. Since $X \sim \mathcal{N}(0, I_{d})$, we have $W = (W_1, \cdots, W_d) \sim\mathcal{N}(0, I_d)$. Therefore, we can rewrite $\nabla^2 \bar{\mathcal{L}}$ as follows:
\begin{align*}
    \nabla^2 \bar{\mathcal{L}}(\theta) = \frac{1}{\sigma^2} \parenth{I_{d} - \mathbb{E}_{W}\parenth{WW^{\top} \text{sech}^2 \parenth{\frac{W_1\|\theta\|}{\sigma}}}} = \frac{1}{\sigma^2} \parenth{I_{d} - B}.
\end{align*}
It is clear that the matrix $B$ is diagonal matrix and satisfies that $B_{11} = \mathbb{E}_{W_1}\left[W_1^2 \text{sech}^2 \left(\frac{W_1 \|\theta\|}{\sigma}\right) \right]$, $B_{ii} = \mathbb{E}_{W_1}\left[\text{sech}^2 \left(\frac{W_1 \|\theta\|}{\sigma}\right) \right]$ for all $2\leq i\leq d$. An application of $\text{sech}^2(x) \leq 1 - x^2 + \frac{2}{3}x^4$ for all $x \in \mathbb{R}$ shows that
\begin{align*}
    & B_{11} \leq \mathbb{E}_{W_1}\left[W_1^2 \left(1 - \frac{W_1^2 \|\theta\|^2}{\sigma^2} + \frac{2W_1^4\|\theta\|^4}{3\sigma^4}\right)\right] = 1 - \frac{3\|\theta\|^2}{\sigma^2} + \frac{10 \|\theta\|^4}{\sigma^4},\\
    & B_{ii} \leq \mathbb{E}_{W_1}\left[\left(1 - \frac{W_1^2 \|\theta\|^2}{\sigma^2} + \frac{2W_1^4 \|\theta\|^4}{3\sigma^4}\right)\right] = 1 - \frac{\|\theta\|^2}{\sigma^2} + \frac{2\|\theta\|^4}{\sigma^4},
\end{align*}
for all $2 \leq i \leq d$. When $\|\theta\| \leq \frac{\sigma}{2}$, we have that $\frac{\|\theta\|^2}{\sigma^4}\leq \frac{1}{4}$, and hence
\begin{align*}
    & B_{11} \leq 1 - \frac{3\|\theta\|^2}{\sigma^2} + \frac{10 \|\theta\|^4}{\sigma^4}\leq 1 - \frac{\|\theta\|^2}{2\sigma^2}, \\
    & B_{ii} \leq 1 - \frac{\|\theta\|^2}{\sigma^2} + \frac{2\|\theta\|^4}{\sigma^4} \leq 1 - \frac{\|\theta\|^2}{2\sigma^2} ,\quad \forall ~ 2\leq i \leq d.
\end{align*}
Hence, as long as $\|\theta\| \leq \sigma/ 2$ we have that
\begin{align*}
    \lambda_{\min}(\nabla^2 \bar{L}(\theta)) \geq \frac{\|\theta\|^2}{2\sigma^2},
\end{align*}
which concludes the proof.
\subsection{Uniform Concentration Bounds for Mixture Models}
\label{sec:proof:uniform_concentration_mixture_model_homogeneous}
See Corollary 4 in~\cite{Siva_2017} for the proof of the uniform concentration result between $\nabla \bar{\mathcal{L}}_{n}(\theta)$ and $\nabla \bar{\mathcal{L}}(\theta)$ in equation~\eqref{eq:concentration_Hessian_mixture_model} for the strong signal-to-noise regime. Now, we prove the uniform concentration bounds between $\nabla^2 \bar{\mathcal{L}}_{n}(\theta)$ and $\nabla \bar{\mathcal{L}}(\theta)$ in equations~\eqref{eq:concentration_Hessian_mixture_model} and~\eqref{eq:concentration_Hessian_mixture_model_low_signal} for both the strong signal-to-noise and low signal-to-noise regimes. It is sufficient to prove the following lemma.
\begin{lemma}
\label{lemma:unified_concentration_bound_mixture_model}
There exist universal constants $C_{1}$ and $C_{2}$ such that as long as $n \geq C_{1} d \log(1/ \delta)$ we obtain that
\begin{align}
    \sup_{\theta \in \mathbb{B}(\theta^{*},r)} \|\nabla^2 \mathcal{L}_n(\theta) - \nabla^2 \mathcal{L}(\theta)\|_{\text{op}} \leq C_{2} (\|\theta^{*}\| + \sigma^2) \sqrt{\frac{d + \log(1/\delta)}{n}}. \label{eq:unified_concentration_bound_mixture_model}
\end{align}
\end{lemma}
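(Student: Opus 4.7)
The plan is to reduce the lemma to a uniform concentration statement for a simple empirical process over $\mathbb{B}(\theta^*,r)$ and handle that via matrix Bernstein plus an $\varepsilon$-net argument. Step one is an explicit computation of the per-sample Hessian of the negative log-likelihood. Writing $f(x,\theta) = \tfrac{1}{2}\phi(x\mid\theta,\sigma^2 I_d)+\tfrac{1}{2}\phi(x\mid-\theta,\sigma^2 I_d)$ and using $\nabla_\theta\log f(x,\theta) = \sigma^{-2}[\tanh(x^\top\theta/\sigma^2)x - \theta]$, a second differentiation yields
\[
-\nabla^2_\theta\log f(x,\theta) = \frac{1}{\sigma^2}I_d - \frac{1}{\sigma^4}\operatorname{sech}^2\!\Bigl(\frac{x^\top\theta}{\sigma^2}\Bigr) x x^\top.
\]
Consequently,
\[
\nabla^2\bar{\mathcal{L}}_n(\theta) - \nabla^2\bar{\mathcal{L}}(\theta) \;=\; -\frac{1}{\sigma^4}\Bigl(\frac{1}{n}\sum_{i=1}^n M_i(\theta) - \mathbb{E}[M(\theta)]\Bigr),
\]
where $M_i(\theta) := \operatorname{sech}^2(X_i^\top\theta/\sigma^2)\, X_i X_i^\top$. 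The deterministic $\tfrac{1}{\sigma^2}I_d$ term cancels, so the whole problem reduces to controlling the operator-norm deviation of the empirical mean of $M_i(\theta)$ uniformly in $\theta$.

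Step two is a pointwise bound. For fixed $\theta \in \mathbb{B}(\theta^*,r)$, the summands $M_i(\theta)$ are i.i.d., symmetric, positive-semidefinite random matrices with envelope $\|M_i(\theta)\|_{\text{op}} \le \|X_i\|^2$, since $\operatorname{sech}^2 \le 1$. Because $X_i$ is a Gaussian location mixture with mean-norm at most $\|\theta^*\|$ and covariance $\sigma^2 I_d$, $\|X_i\|^2$ is sub-exponential with parameter of order $\sigma^2 d + \|\theta^*\|^2$. A matrix Bernstein / sub-exponential matrix concentration inequality then yields, for each fixed $\theta$,
\[
\Bigl\|\frac{1}{n}\sum_{i=1}^n M_i(\theta) - \mathbb{E}[M(\theta)]\Bigr\|_{\text{op}} \;\lesssim\; (\sigma^2 + \|\theta^*\|^2)\sqrt{\frac{d+\log(1/\delta)}{n}},
\]
with probability at least $1-\delta$, under the assumed sample size lower bound $n \ge C_1 d\log(1/\delta)$. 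After multiplication by the prefactor $1/\sigma^4$ this reproduces the claimed rate (up to the bookkeeping that produces the stated $\|\theta^*\|+\sigma^2$ coefficient).

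Step three upgrades this to a supremum over $\theta \in \mathbb{B}(\theta^*,r)$. Take an $\varepsilon$-net $\mathcal{N}_\varepsilon$ of the ball with $|\mathcal{N}_\varepsilon| \le (3r/\varepsilon)^d$ and apply a union bound over $\mathcal{N}_\varepsilon$, incurring an extra $d\log(r/\varepsilon)$ in the effective $\log(1/\delta)$ term. For the interpolation error, use Lipschitz continuity of $\operatorname{sech}^2$ to obtain
\[
\|M_i(\theta) - M_i(\theta')\|_{\text{op}} \;\le\; \frac{C}{\sigma^2}\,\|X_i\|^3\,\|\theta-\theta'\|.
\]
Sub-exponential control of $\|X_i\|^3$ together with the choice $\varepsilon = 1/n$ (say) renders the interpolation error negligible compared to the target rate, and a final triangle inequality completes the argument.

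The main obstacle I anticipate is step two: obtaining the operator-norm Bernstein bound with the correct joint dependence on $\sigma$ and $\|\theta^*\|$. The envelope $\|X_i\|^2$ is sub-exponential rather than bounded, so one must use a truncation-based matrix Bernstein inequality (truncating $\|X_i\|$ at a level where the tail is negligible) and carefully track the matrix variance proxy $\|\mathbb{E}[M_i(\theta)^2]\|_{\text{op}}$, which involves fourth moments of $X$ under the mixture distribution and generates cross terms between $\sigma^2 d$ and $\|\theta^*\|^2$. Keeping those cross terms from blowing up the rate, while simultaneously ensuring the $\operatorname{sech}^2$ factor does not introduce extra $\theta$-dependence that defeats the net argument, is the delicate part; the remaining steps are standard empirical-process machinery.
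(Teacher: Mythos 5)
Your reduction is the same as the paper's: the $\sigma^{-2}I_d$ terms cancel and everything hinges on the uniform deviation of $\frac{1}{n}\sum_i \operatorname{sech}^2(X_i^\top\theta/\sigma^2)X_iX_i^\top$ from its mean; your Hessian computation and the Lipschitz interpolation bound are both correct. But from that point on you take a genuinely different route, and the difference matters for the stated rate. The paper does \emph{not} apply a matrix concentration inequality pointwise and union-bound over a net in $\theta$. It first scalarizes via a constant-resolution ($1/8$) net $U$ of the sphere in the test direction $u$, with $|U|\le 17^d$, so the union bound over $u$ contributes only an additive $O(d)$ inside the square root; it then handles the supremum over $\theta$ not by discretization but by symmetrization, a Talagrand-type moment inequality for the supremum of the empirical process (Theorem 15.5 of Boucheron--Lugosi--Massart, with the variance proxy $V^+$ controlled by $\operatorname{sech}^2\le 1$ and fourth moments of $X^\top u$), and a bound on $\mathbb{E}[Z]$ via Dudley's entropy integral over the class $\{x\mapsto (x^\top u)^2\operatorname{sech}^2(x^\top\theta/\sigma^2)\}$, which has VC-subgraph dimension $O(d)$ and hence entropy integral $O(\sqrt{d})$ with no logarithms.

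The concrete gap in your plan is that both of your main steps introduce logarithmic losses that the lemma, as stated, does not tolerate. First, matrix Bernstein (even in its truncated sub-exponential form) carries an intrinsic $\log d$ from the dimension factor in the tail bound, so your pointwise estimate is of order $(\sigma^2+\|\theta^*\|^2)\sqrt{d\log(d/\delta)/n}$ rather than $\sqrt{(d+\log(1/\delta))/n}$. Second, to make the interpolation error $\frac{C}{\sigma^2}\frac{1}{n}\sum_i\|X_i\|^3\,\varepsilon$ negligible you must take $\varepsilon$ polynomially small in $n$, so $|\mathcal{N}_\varepsilon|\le (3r/\varepsilon)^d$ forces an extra $d\log(rn)$ into the exponential-tail budget, i.e.\ a $\sqrt{d\log n}$ in place of $\sqrt{d}$. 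Your argument therefore proves the lemma only up to a multiplicative $\sqrt{\log(dn)}$ slack. That weaker bound would still propagate through Theorem~\ref{theorem:statistical_rate_homogeneous_settings} at the cost of poly-logarithmic factors in the final radius, but it is not the inequality~\eqref{eq:unified_concentration_bound_mixture_model}. To close the gap you would need to replace the pointwise-bound-plus-union-bound scheme with chaining over $\theta$ (Dudley/generic chaining on the VC class, as the paper does) and replace matrix Bernstein with the scalarized sphere-net argument.
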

\begin{proof}[Proof of Lemma~\ref{lemma:unified_concentration_bound_mixture_model}]For the sample log-likelihood function of the Gaussian mixture model, direct calculation shows that
\begin{align*}
    \bar{\mathcal{L}}_{n}(\theta) = \frac{\|\theta\|^2 + \frac{1}{n}\sum_{i=1}^n \|X_{i}\|^2}{2\sigma^2} - \frac{1}{n} \sum_{i = 1}^{n} \log \parenth{\exp \parenth{-\frac{X_{i}^{\top}\theta}{\sigma^2}} + \exp \parenth{\frac{X_{i}^{\top}\theta}{\sigma^2}}} - \log(2 (\sqrt{2 \pi})^{d} \sigma^{d}).
\end{align*}
Therefore, we find that
\begin{align*}
    \nabla \bar{\mathcal{L}}_{n}(\theta) = \frac{\theta}{\sigma^2} - \frac{1}{n \sigma^2} \sum_{i = 1}^{n} X_{i} \tanh \parenth{\frac{X_{i}^{\top} \theta}{\sigma^2}}, \\
    \nabla^2 \bar{\mathcal{L}}_{n}(\theta) = \frac{1}{\sigma^2} \parenth{I_{d} - \frac{1}{n\sigma^2} \sum_{i = 1}^{n} X_{i} X_{i}^{\top} \text{sech}^2 \parenth{\frac{X_{i}^{\top}\theta}{\sigma^2}}},
\end{align*}
where $\text{sech}^2(x) = \frac{4}{(\exp(-x) + \exp(x))^2}$ for all $x \in \mathbb{R}$. 

For the population log-likelihood function, we have
\begin{align*}
    \nabla^2 \bar{\mathcal{L}}(\theta) = \frac{1}{\sigma^2} \parenth{I_{d} - \frac{1}{\sigma^2} \mathbb{E}\parenth{X X^{\top} \text{sech}^2 \parenth{\frac{X^{\top}\theta}{\sigma^2}}}}.
\end{align*}
Therefore, we obtain that
\begin{align*}
    \nabla^2 \bar{\mathcal{L}}_n(\theta) - \nabla^2 \bar{\mathcal{L}}(\theta) = \frac{1}{\sigma^4}\left(\frac{1}{n}\sum_{i=1}^n X_i X_i^\top \text{sech}^2\parenth{\frac{X_{i}^{\top}\theta}{\sigma^2}} -\mathbb{E} \parenth{X X^{\top} \text{sech}^2 \parenth{\frac{X^{\top}\theta}{\sigma^2}}} \right).
\end{align*}
Use the variational characterization of operator norm, it's sufficient to consider
\begin{align*}
    T = & \sup_{u \in \mathbb{S}^{d-1}, \theta\in\mathbb{B}(\theta^*, r)} \left|\frac{1}{n}\sum_{i=1}^n(X_i^\top u)^2 \text{sech}^2\left(\frac{X_i^\top \theta}{\sigma^2}\right) - \mathbb{E}\left((X^\top u)^2 \text{sech}^2\left(\frac{X^\top \theta}{\sigma^2}\right)\right)\right|.
\end{align*}
With a standard discretization argument (e.g. Chapter 6 in \citep{Wainwright_nonasymptotic}), let $U$ be a $1/8$-cover of $\mathbb{S}^{d-1}$ under $\|\cdot\|_2$ whose cardinality can be upper bounded by $17^d$, we have that
\begin{align*}
     & \hspace{-5 em} \sup_{u \in \mathbb{S}^{d-1}, \theta\in \mathbb{B}(\theta^*, r)} \left|\frac{1}{n}\sum_{i=1}^n(X_i^\top u)^2\text{sech}^2\left(\frac{X_i^\top \theta}{\sigma^2}\right)  - \mathbb{E}\left((X^\top u)^2 \text{sech}^2\left(\frac{X^\top \theta}{\sigma^2}\right)\right)\right|\\
     \leq & 2 \sup_{u\in U, \theta\in\mathbb{B}(\theta^*, r)}\left|\frac{1}{n}\sum_{i=1}^n(X_i^\top u)^2\text{sech}^2\left(\frac{X_i^\top \theta}{\sigma^2}\right) - \mathbb{E}\left((X^\top u)^2\text{sech}^2\left(\frac{X^\top \theta}{\sigma^2}\right)\right)\right|.
\end{align*}
With a symmetrization argument, we have that
\begin{align*}
    & \hspace{- 5 em} \mathbb{E}\left[ \sup_{\theta\in\mathbb{B}(\theta^*, r)}\left|\frac{1}{n}\sum_{i=1}^n(X_i^\top u)^2\text{sech}^2\left(\frac{X_i^\top \theta}{\sigma^2}\right)  - \mathbb{E}\left((X^\top u)^2 \text{sech}^2\left(\frac{X^\top \theta}{\sigma^2}\right)\right)\right|\right]\\
    \leq & \mathbb{E}\left[\sup_{\theta\in\mathbb{B}(\theta^*, r)} \left|\frac{2}{n}\sum_{i=1}^n \varepsilon_i(X_i^\top u)^2\text{sech}^2\left(\frac{X_i^\top \theta}{\sigma^2}\right)\right|\right],
\end{align*}
where $\{\varepsilon_i\}$ is a set of i.i.d Rademacher random variable. Define
\begin{align*}
    Z:= \sup_{\theta\in\mathbb{B}\theta^*, r}\left|\frac{2}{n}\sum_{i=1}^n \varepsilon_i(X_i^\top u)^2\text{sech}^2\left(\frac{X_i^\top \theta}{\sigma^2}\right)\right|.
\end{align*}
For $x\in\mathbb{R}$, define $(x)_+ = \max(x, 0)$, $(x)_{-} = \min(x, 0)$. Furthermore, for random variable $X$, we denote
\begin{align*}
    \|X\|_q = \left(\mathbb{E}[X^q]\right)^{1/q}.
\end{align*}
With Theorem 15.5 in \citep{boucheron2013concentration}, there exists an absolute constant $C$, such that for all $q\geq 2$,
\begin{align*}
    \left\|\left(Z - \mathbb{E}[Z]\right)_{+}\right\|_q \leq \sqrt{C q \|V^{+}\|_{q/2}},
\end{align*}
where, with the proof idea of Theorem 15.14 in \citep{boucheron2013concentration}, $V^+$ can be bounded as
\begin{align*}
    V^+ \leq & \sup_{\theta\in \mathbb{B}(\theta^*, r)}  \frac{1}{n}\mathbb{E} \brackets{(X^\top u)^4 \text{sech}^4\left(\frac{X^\top \theta}{\sigma^2}\right)} + \sup_{\theta\in\mathbb{B}(\theta^*, r)} \frac{4}{n^2}\sum_{i=1}^n (X_i^\top u)^4 \text{sech}^4\left(\frac{X_i^\top \theta}{\sigma^2}\right)\\
    \leq & \frac{1}{n}\mathbb{E}\brackets{(X^\top u)^4} + \frac{4}{n^2}\sum_{i=1}^n (X_i^\top u)^4.
\end{align*}
Here we use the fact that $0\leq \text{sech}^2(x) \leq 1$ for all $x$ in the last step.
Notice that, $X_i^\top u \sim \frac{1}{2}\mathcal{N}(u^\top \theta^*, \sigma^2 ) + \frac{1}{2}\mathcal{N}(-u^\top \theta^*, \sigma^2)$. We can verify that there exists absolute constant $c$, such that
\begin{align*}
    \mathbb{E}\left[(X_i^\top u)^{2p}\right] = \mathbb{E}_{X\sim \mathcal{N}(u^\top \theta^*, \sigma^2)}\left[(X^\top u)^{2p}\right]\leq (2cp)^p(\|\theta^*\|^2 + \sigma^2)^{p}.
\end{align*}
Apply Lemma 2 in \citep{mou2019diffusion} with $Y_{i} = (X_i^\top u)^4$, we have that
\begin{align*}
    & \frac{1}{n}\sum_{i=1}^n(X_i^\top u)^4 \leq c (\|\theta^*\|^2 + \sigma^2)^2 \left( 1 + \sqrt{\frac{\log 1/\delta}{n}} + \frac{\sqrt{\log n/\delta}\log 1/\delta}{n}\right),
\end{align*}
with probability at least $1-\delta$ for some absolute constant $c$. As $n \geq C_1(d + \log 1/\delta)$, we can conclude that
\begin{align*}
    V^+ \leq \frac{c'(\|\theta^*\|^2 + \sigma^2)^2}{n}
\end{align*}
for some universal constant $c'$. Furthermore, as $Z \geq 0$, we have that
\begin{align*}
    \left\|\left(Z - \mathbb{E}[Z]\right)_{-}\right\|_q \leq \mathbb{E}[Z].
\end{align*}
Hence, with Minkowski's inequality, we have that
\begin{align*}
    \|Z\|_{q} \leq 2 \mathbb{E}[Z] + \sqrt{\frac{cq (\|\theta^*\|^2 + \sigma^2)^2}{n}},
\end{align*}
for some absolute constant $c$. We now bound $\mathbb{E}[Z]$. Consider the following function class
\begin{align*}
    \mathcal{G}:=\left\{g_{\theta} : X \to (X^\top u)^2 \text{sech}^2\left(\frac{X^\top \theta}{\sigma^2}\right)\bigg| \theta\in \mathbb{R}^d\right\}.
\end{align*}
Clearly, the function class $\mathcal{G}$ has an envelop function $\bar{G}(X) =(X^\top u)^2$. Meanwhile, as the function $\text{sech}^2$ is monotonic in $(-\infty, 0)$ and $(0, \infty)$ and $\theta$ here effects only in the form $X^\top \theta$. Following some algebra we know the VC subgraph dimension of $\mathcal{G}$ is at most $d + 2$. Hence, the $L_2$-covering number of $\mathcal{G}$ can be bounded by
\begin{align*}
    \bar{N}(t) := \sup_{Q}\left|\mathcal{N}(\mathcal{G}, \|\cdot\|_{L_2(Q)}, t \|\bar{G}\|_{L_2(Q)})\right|\leq (1/t)^{c(d+1)}
\end{align*}
for any $t > 0$, where $c$ is an absolute constant. With Dudley's entropy integral bound (e.g. \citep[Theorem 5.22]{Wainwright_nonasymptotic}), we have
\begin{align*}
    \mathbb{E}[Z] \leq & c \sqrt{\frac{\sum_{i=1}^n (X_i^\top u)^4}{n^2}} \int_{0}^1 \sqrt{1 + \bar{N}(t)} dt\\
\leq & c \sqrt{\frac{d (\|\theta^*\|^2 + \sigma^2)^2}{n}}
\end{align*}
for some absolute constant $c$.

Take $q = \log 1/\delta$, use the Markov equality and an union bound over $u$, we know there exist universal constants $C_1$ and $C_2$, such that the following inequality
\begin{align*}
    \sup_{\theta\in\mathbb{B}(\theta^*, r)}\left|\frac{1}{n}\sum_{i=1}^n X_i X_i^\top\text{sech}^2\left(\frac{X_i^\top \theta}{\sigma^2}\right)  - \mathbb{E}\left(X_i X_i^\top \text{sech}^2\left(\frac{X^\top \theta}{\sigma^2}\right)\right)\right| \leq C_2(\|\theta^*\|^2 + \sigma^2) \sqrt{\frac{d + \log 1/\delta}{n}}
\end{align*}
holds with probability at least $1-\delta$ as long as $n \geq C_1(d + \log 1/\delta)$. As a consequence, we obtain the conclusion of the lemma.

\bibliographystyle{abbrv}
\bibliography{Nhat}
\end{proof}
\end{document}